\documentclass[10pt,twocolumn,letterpaper]{article}

\usepackage[pagenumbers]{cvpr} 

\usepackage{multirow}
\usepackage{colortbl}
\usepackage{lipsum}
\usepackage{bm}
\usepackage{subcaption}

\usepackage{amsmath,amssymb}
\usepackage{algpseudocode}
\usepackage{algorithm}
\usepackage{pifont}

\usepackage{xcolor}
\definecolor{cvprgreen}{rgb}{0.0,0.55,0.0}
\definecolor{cvprred}{rgb}{0.8,0.1,0.1}
\newcommand{\cvprcheck}{\textcolor{cvprred}{\checkmark}}
\newcommand{\cvprxmark}{\textcolor{cvprgreen}{\ding{55}}}

\usepackage{amsthm}

\newtheorem{proposition}{Proposition}

\usepackage{array}
\newcolumntype{C}{>{\centering\arraybackslash}p{0.08\textwidth}}

\usepackage{tabularx}
\newcolumntype{S}{>{\centering\arraybackslash}p{1.3cm}}  
\newcolumntype{T}{>{\centering\arraybackslash}p{1.9cm}}  
\newcolumntype{D}{>{\centering\arraybackslash}X}         

\definecolor{cvprblue}{rgb}{0.21,0.49,0.74}
\usepackage[pagebackref,breaklinks,colorlinks,allcolors=cvprblue]{hyperref}

\def\paperID{}
\def\confName{CVPR}
\def\confYear{2026}

\title{Label-Free Cross-Task LoRA Merging with Null-Space Compression}

\author{
    Wonyoung Lee$^*$ \\
    KAIST \\
    {\tt\small wylee@kaist.ac.kr}
    \and
    Wooseong Jeong$^*$ \\
    KAIST \\
    {\tt\small stk14570@kaist.ac.kr}
    \and
    Kuk-Jin Yoon \\
    KAIST \\
    {\tt\small kjyoon@kaist.ac.kr}
}

\begin{document}
\maketitle

\renewcommand{\thefootnote}{\fnsymbol{footnote}}
\footnotetext[1]{Equal contribution to this work.}

\begin{abstract}
	Model merging combines independently fine-tuned checkpoints without joint multi-task training.
	In the era of foundation-model, fine-tuning with Low-Rank Adaptation (LoRA) is prevalent, making LoRA merging a promising target. Existing approaches can work in homogeneous settings where all target tasks are classification but often fail when tasks span classification and regression. Approaches using entropy-based surrogates do not apply to regression and are costly for large language models due to long token sequences. We introduce \emph{Null-Space Compression (NSC) Merging}, a label-free, output-agnostic method that sets merge weights from adapter geometry. Our key observation is that during LoRA finetuning the down-projection factor \(A\) in \(\Delta W = BA\) compresses its null space, and the compression correlates with performance. NSC uses this as an optimization signal for merging that can generalize across classification, regression, and sequence generation. NSC achieves state-of-the-art performance across twenty heterogeneous vision tasks with balanced gains where prior methods overfit subsets of tasks. It also outperforms baselines on six NLI benchmarks and on vision–language evaluations for VQA and image captioning, demonstrating scalability and effectiveness.
    Our code is available at \url{https://github.com/wonyoung01/nsc_merging}

\end{abstract}

\section{Introduction}
\label{sec:intro}

Modern deep learning models, including vision transformers~\cite{dosovitskiy2020image} and
large foundation models such as LLMs~\cite{dubey2024llama} and
VLMs~\cite{liu2023visual, liu2024improved}, achieve strong performance and exhibit broad
generalization to unseen tasks.
Despite their scale, these models still require task-specific fine-tuning when the target
distribution or objective departs from what was covered during pretraining.
Conventional full fine-tuning, however, is computationally and memory intensive, limiting its
practicality in many real-world settings.

This challenge has motivated the development of parameter-efficient fine-tuning (PEFT)
methods~\cite{houlsby2019parameter,peft}.
Among them, Low-Rank Adaptation (LoRA)~\cite{hu2022lora} has become a widely adopted strategy
for adapting large models while keeping base weights frozen.
LoRA is advantageous for training and deployment, as it substantially reduces
the number of trainable parameters during fine-tuning and produces compact adapter modules
that are easy to distribute and share through open-source model hubs.

Beyond single-task adaptation, modern systems are expected to handle multiple downstream tasks. The traditional solution is multi-task learning~\cite{yu2020gradient, liu2021conflict, liu2023famo, jeong2024quantifying, jeong2025selective, jeong2025resolving, jeong2025synchronizing, jeong2026stabilizing}, which requires a unified training pipeline and a joint dataset with aligned per-task labels. This is costly in curation and computation, and often infeasible when labels are proprietary or restricted by license. It also makes it difficult to leverage numerous task-specialized models that are scattered across open model hubs, because those checkpoints cannot be reused directly without centralized retraining. These constraints motivate model merging, where independently finetuned checkpoints are combined without revisiting labeled data. In the foundation-model era, model merging is especially effective with LoRA adapters. One can merge or compose adapters trained on diverse tasks, retrieved independently from model hubs, to synthesize a single model that inherits their strengths while preserving the benefits of lightweight storage and easy distribution.

Conventional model merging has largely focused on full model weights \cite{ilharco2022editing, jin2022dataless, yadav2023ties, yang2023adamerging, yu2024language}. Task Arithmetic \cite{ilharco2022editing} constructs task vectors as differences between fine-tuned and pretrained checkpoints, then adds them with a scale chosen on a small validation split. In contrast, LoRA-aware merging \cite{stoica2025knots, zhao2025loralego} operates directly on the low-rank adapters rather than the full weights. Working in the low-rank space is memory and compute efficient and it performs markedly better in LoRA settings, where conventional full-weight mergers can degrade due to strong subspace misalignment between adapters \citep{stoica2025knots}. Given that LoRA fine-tuning is now standard for LLMs and VLMs, LoRA-targeted merging is a promising direction.

Another advantage of LoRA merging is that it can use gradient signals to set merge weights across checkpoints, which is prohibitive at the scale of LLMs or VLMs when operating on full weights. AdaMerging~\cite{yang2023adamerging} is a representative gradient-based method that estimates merge weights without labels by minimizing output entropy as a surrogate objective. This surrogate is standard in test-time adaptation \cite{wang_tent_2020} and has been adopted by AdaMerging and subsequent work \cite{tang_merging_2024, wei_modeling_2025, chen2024pareto}, yielding strong gains on classification. However, entropy-based approaches face fundamental limitations in broader tasks and domains. First, entropy does not apply to regression problems such as depth estimation or surface normal prediction. Second, for LLMs and VLMs, entropy must be computed at each token prediction, so cost grows with the number of generated tokens and quickly becomes a bottleneck for \emph{entropy-based} merging.

To address these issues, we introduce \emph{Null-Space Compression (NSC) Merging}. Our key observation is that during LoRA fine-tuning the down-projection factor $\bm A$ in $\Delta \bm W = \bm B \bm A$ defines a projection onto a low-dimensional subspace of the input features, and the corresponding null-space ratio, which is the proportion of activation discarded by this projection, decreases over training. This decrease is closely correlated with task performance. We use this null-space compression as a label-free signal to determine merging weights across adapters. Unlike entropy-based surrogates, NSC applies naturally to both classification and regression and remains efficient for long sequence generation.  NSC Merging demonstrates superior performance across heterogeneous scenarios that mix classification and regression, and we validate it on 20 vision tasks, natural language inference for LLMs, and Vision Question Answering (VQA) and image captioning for VLMs.

Our main contribution are as follows:
\begin{itemize}[leftmargin=*, itemsep=2pt, topsep=2pt]
	\item We observe LoRA finetuning systematically compresses the null space of the down-projection factor \(\bm A\), and the size of null space is closely correlated with the peformance of LoRA adapted model, that can be used as learning signal.
	\item Based on the observations, we suggest Null-Space Compression (NSC) Merging using a effective gradient-based adaptation signal for model merging which can be used for various scenario where entropy-based methods cannot be applied or inefficient.
	\item We validate our methods with extensive experiments including various models and benchmarks including conventional vision encoder, LLM, VLM and shows state-of-the-art performance compared to previous model merging baselines.
\end{itemize}

\section{Related Work}
\label{sec:related}

\textbf{Model Merging.}
Model merging combines knowledge from models that were fine-tuned separately, avoiding joint multi-task training and reducing computational cost while aiming to preserve task performance \citep{yang2024model}.
Representative methods include \emph{Task Arithmetic}, which adds or subtracts task vectors \citep{ilharco2022editing}, \emph{RegMean}, which uses inner-product statistics of layer inputs to enable data-free fusion \citep{jin2022dataless}, \emph{TIES}, which resolves sign conflicts and prunes small updates \citep{yadav2023ties}, and \emph{DARE}, which sparsifies parameter deltas \citep{yu2024language}.
\emph{AdaMerging} estimates layer-wise merging coefficients by minimizing an entropy-based objective to improve multi-task utility \citep{yang2023adamerging, tang_merging_2024, wei_modeling_2025, chen2024pareto}.
When models are trained on different data or initialized differently, permutation alignment helps maintain linear mode connectivity \citep{entezari2021role,ainsworth2022git}.
\emph{ZipIt} aligns intermediate features to re-basin networks and match neurons across models, enabling training-free layer alignment \citep{stoica2023zipit}.
Uncertainty-aware strategies mitigate mismatch through Fisher-weighted averaging \citep{matena2022merging} and through uncertainty or gradient matching \citep{daheim2023model}. Pareto merging \citep{chen2024pareto} motivate multi-objective treatments considering preferences.

\vspace{2pt}
\noindent\textbf{LoRA Merging.}
Low-Rank Adaptation (LoRA) \citep{hu2022lora} is a standard fine-tuning mechanism for large networks, including foundation models such as LLMs \citep{dubey2024llama}.
Conventional full-parameter merging methods \citep{jin2022dataless,ilharco2022editing,yadav2023ties,yu2024language} transfer poorly to LoRA adapters, which has motivated LoRA-specific approaches \citep{stoica2025knots,zhao2025loralego,tang2025lora}.
\citet{stoica2025knots} report that LoRA-updated models exhibit weaker cross-model representation alignment than full-rank fine-tunes and propose \textsc{KnOTS}, which concatenates adapter updates and applies SVD to align them in a shared subspace before merging principal components.
\citet{zhao2025loralego} introduce minimal semantic units and use clustering to assemble a merged adapter with adjustable effective rank.
Compositions of multiple LoRAs have also been explored for image generation, including multi-LoRA composition and concept mixing \citep{zhong2024multi,gandikota2024concept,zhuang2025timestep}.

Conventional model merging underperforms on LoRA-adapted models, whereas LoRA-targeted merging performs better. Given the prevalence of PEFT in foundation models, LoRA-targeted merging is a promising direction. In the LoRA regime, gradient-based methods such as AdaMerging are feasible in compute and memory, unlike merging full-rank fine-tuned weights. However, AdaMerging relies on entropy minimization, which does not apply to regression and scales poorly to LLMs and VLMs. In this paper, we propose a new gradient-based method that is efficient, handles both classification and regression, and scales to LLMs and VLMs where prior approaches are inefficient. Further discussion of prior work and our approach appears in \Cref{append:more_related_work} of the supplementary material.

\section{Method}
\label{method}

To extend model merging to broader tasks and domains, we propose a gradient-based merging algorithm, termed \emph{Null-Space Compression (NSC) merging}.
The optimization objective stems from our key observation about the
\emph{dynamics of LoRA layers during fine-tuning}, which we refer to as \emph{null-space compression}.
Our method is designed to be task-agnostic: it optimizes layer-wise merging coefficients using only \emph{unlabeled data} and \emph{structural information} available from the LoRA adapter.
Before delving into this phenomenon, we outline the goal of our approach.

\subsection{Preliminaries}

Consider a pretrained base model with $L$ layers and parameters $\{\bm W_0^{\ell}\}_{\ell=1}^{L}$, where $\bm W_0^{\ell}$ denotes the parameter tensor at layer $\ell$. Let there be $K$ downstream tasks indexed by $k\in\{1,\dots,K\}$. For each task $k$, let $\{\bm W_k^{\ell}\}_{\ell=1}^{L}$ be the parameters of the model obtained by fine-tuning on task $k$ starting from the base model.
The difference $\Delta \bm W_k^\ell = \bm W_k^\ell - \bm W_0^\ell$ is referred to as a \emph{task vector}~\cite{ilharco2022editing}.
Model merging~\citep{yang2024model} aims to combine these task vectors into a single model, allowing efficient multi-task adaptation and knowledge integration.

With the rise of large-scale foundation models~\citep{dubey2024llama, liu2023visual}, full fine-tuning is often infeasible, and parameter-efficient fine-tuning with LoRA~\citep{hu2022lora} has been widely adopted for its simplicity and efficiency. Consequently, principled merging strategies tailored to LoRA are essential. In this paper, we propose a merging strategy for LoRA-fine-tuned models and validate it across diverse tasks and models.
We denote the task-specific LoRA as $\{\Delta\bm W_{k}^{\ell}\}_{\ell \in \mathcal{J}}$,
where $\mathcal{J} \subseteq \{1, \dots, L\}$ is the subset of layers equipped with adapters.
Each task vector $\Delta\bm W_{k}^{\ell}$ is expressed as
\begin{equation}
	\Delta\bm W_{k}^{\ell} = \bm B_{k}^{\ell} \bm A_{k}^{\ell},
	\quad
	\bm B_{k}^{\ell} \in \mathbb{R}^{d_{\text{out}}^{\ell} \times r_k},\;
	\bm A_{k}^{\ell} \in \mathbb{R}^{d_{\text{in}}^{\ell} \times r_k}
	\label{eq:lora}
\end{equation}
where $r_k$ denotes the adapter rank for the $k$-th task and typically satisfies
$r_k \ll \min(d_{\text{in}}^{\ell}, d_{\text{out}}^{\ell})$.

Given layer-wise coefficients $\lambda_{k}^{\ell}$ for task $k$, the merged update is defined as
\begin{equation}
	\Delta\bm W_{\text{merge}}^{\ell} =
	\begin{cases}
		\displaystyle \sum_{k=1}^{K} \lambda_{k}^{\ell} \, \Delta\bm W_{k}^{\ell}, & \ell \in \mathcal{J},    \\[6pt]
		\mathbf{0},                                                                & \ell \notin \mathcal{J}.
	\end{cases}
	\label{eq:merged_update}
\end{equation}
Our goal is to learn optimal coefficients $\{\lambda_{k}^{\ell}\}$ such that the merged model
$\bm \Theta_\text{merge} = \{\bm W_{0}^{\ell} + \Delta\bm W_{\text{merge}}^{\ell}\}_{\ell=1}^{L}$ maximizes overall performance across tasks,
balancing shared representations and task-specific specialization.

\subsection{Null-space compression in LoRA fine-tuning}

\begin{figure}[t]
	\centering
	\begin{subfigure}{\linewidth}
		\centering
		\includegraphics[width=\linewidth]{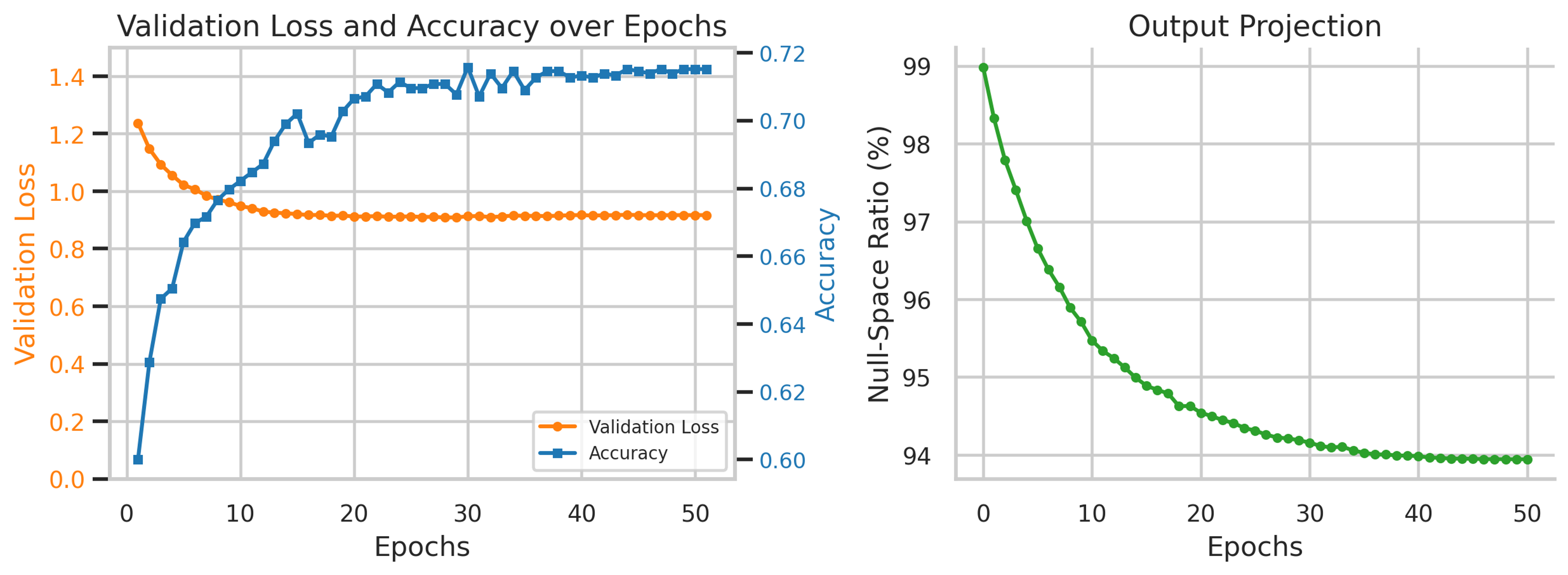}
		\caption{Image Classification (Classification Task)}
		\label{subfig:cls_nsc}
	\end{subfigure}

	\vspace{0.6em} 

	\begin{subfigure}{\linewidth}
		\centering
		\includegraphics[width=\linewidth]{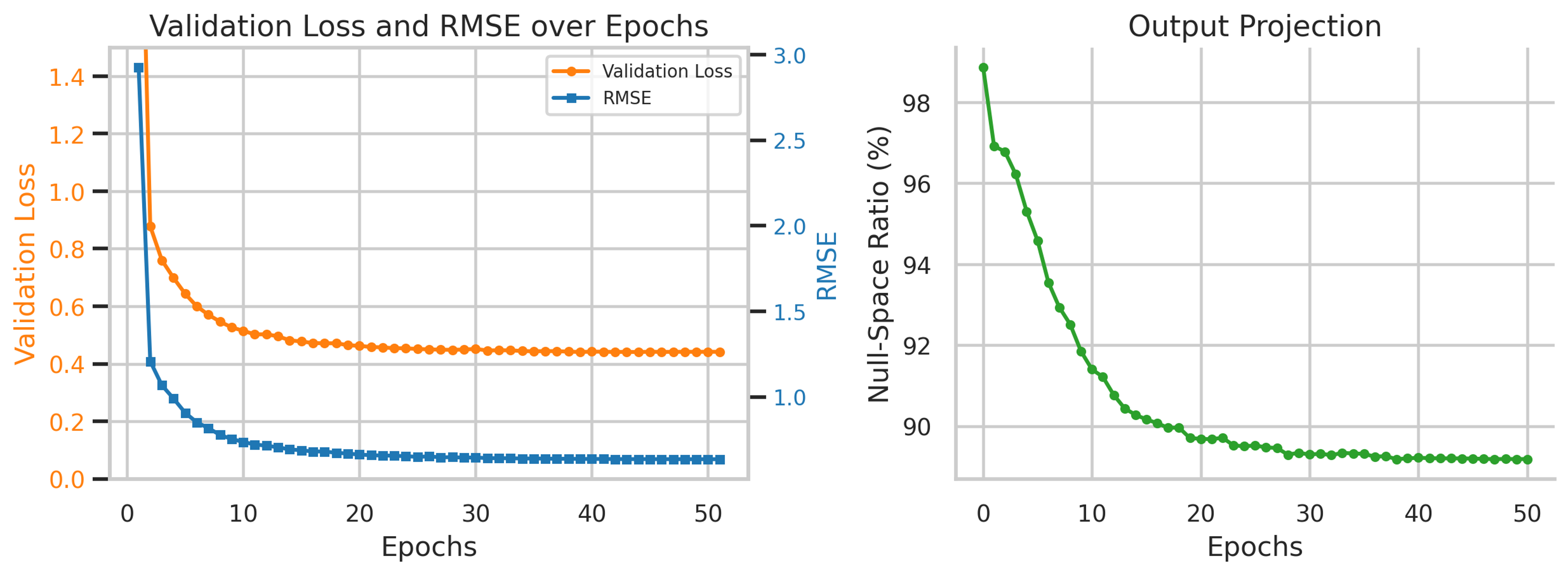}
		\caption{Depth Estimation (Regression Task)}
		\label{subfig:depth_nsc}
	\end{subfigure}

	\caption{Visualization of validation loss, task performance, and the null-space ratio for an output projection layer during LoRA fine-tuning on (a) image classification and (b) depth estimation.}
	\label{fig:nsc_finetuning}
	\vspace{-5pt}
\end{figure}

Considering the structure of LoRA, we denote $\bm A_{k}^{\ell}$ in \cref{eq:lora} as down-projection matrix. This down-projection matrix inherently project incoming activations into a reduced subspace, and hence discarding components that fall into the \emph{null space}. We refer to the proportion of activation suppressed by this projection as the \emph{null-space ratio}. Specifically, we define the null-space ratio of an adapter weight $\Delta \bm W_k^{\ell} = \bm B_k^{\ell} \bm A_k^{\ell}$ with input feature $\bm z$ as
\begin{equation}
	\omega_k^{\ell}(\bm z)
	=
	\frac{
	\left\|
	\mathrm{Proj}_{\mathcal{N}(\bm A_k^{\ell})}(\bm z)
	\right\|_2
	}{
	\left\|
	\bm z
	\right\|_2
	},
	\label{eq:null_ratio}
\end{equation}
where $\mathrm{Proj}_{\mathcal{N}(\bm A_k^{\ell})}$ denotes the orthogonal
projection operator onto the null space of $\bm A_k^{\ell}$.
This ratio measures the extent to which the input feature is discarded by the adapter:
a higher $\omega_k^{\ell}(\bm z)$ indicates that a larger portion of the activation lies in the null space, implying that less information is propagated to subsequent layers.

\vspace{2pt}
\noindent\textbf{Compression dynamics during fine-tuning.} To quantify layer-wise null-space ratio,
we compute the expectation $\mathbb{E}_{\bm x}[\omega_k^\ell(\bm z^\ell(\bm x; \bm \Theta))]$ over unlabeled samples,
where $\bm z^{\ell}(\bm x; \bm \Theta)$ represents the \emph{incoming} activation to the $\ell$-th layer of a model parameterized by $\bm \Theta$ for an input batch $\bm x$. \Cref{fig:nsc_finetuning} tracks how the null-space ratio evolves during LoRA fine-tuning. For \emph{classification} (\cref{subfig:cls_nsc}), we fine-tune CLIP~\citep{radford2021learning} with a ViT-B/32 backbone and report validation loss, accuracy, and the layer-wise null-space ratio. For \emph{regression} (\cref{subfig:depth_nsc}), we fine-tune a ViT-B/16~\citep{dosovitskiy2020image} backbone with a lightweight decoder on depth estimation and report validation loss, RMSE, and the same ratio. In both settings, LoRA adapters of rank 16 are attached to the query, key, value, and output projections in each self-attention block, and training runs for 50 epochs.

As training progresses, the null-space ratio consistently decreases, indicating that a smaller fraction of the activation lies in the adapter’s null space.
Both models use a latent dimension of 768, while the LoRA rank is 16, so the adapter subspace spans about 2.1 percent of the feature space.
Given this small coverage, the observed decrease is substantial, showing that more of the representation is being captured within the adapter subspace.
Empirically, a lower null-space ratio is associated with higher task performance, exhibiting a strong inverse correlation across datasets and layers.
The same pattern holds for both classification and regression despite differences in objectives and data domains.
This motivates us to use the ratio as a task-agnostic signal for merging that generalizes across settings.
Similar behavior appears in other LoRA layers and transformer blocks (see \cref{append:more_analysis_nsc_finetune} in the supplementary material).

\begin{figure}[t]
	\centering
	\includegraphics[width=0.47\textwidth]{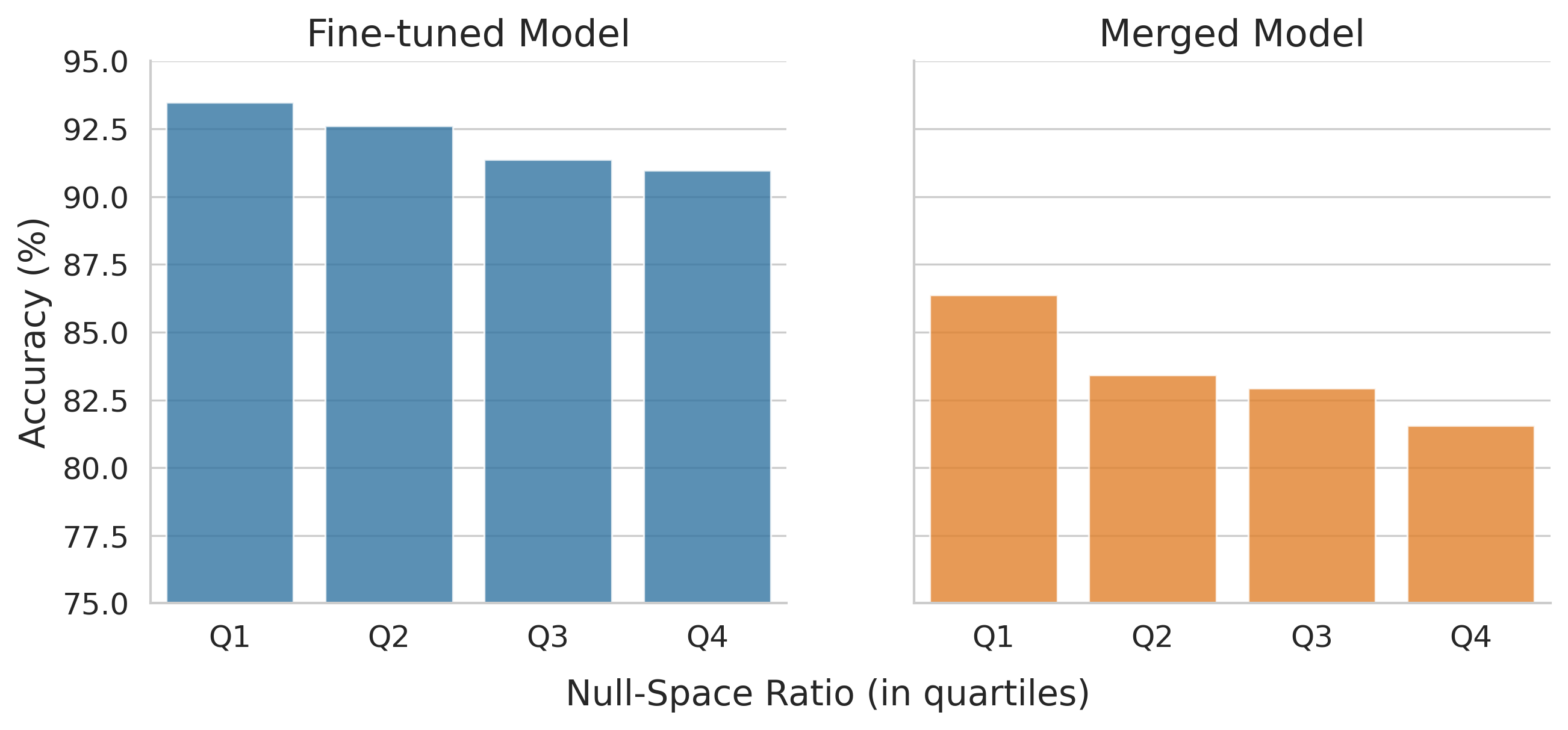}
	\caption{Classification accuracy versus the \emph{null-space ratio}. Accuracy is averaged within quartiles of the ratio across tasks. Left shows fine-tuned experts, right shows the merged model.}
	\label{fig:nsc_acc_quantile}
	\vspace{-5pt}
\end{figure}

\vspace{2pt}
\noindent\textbf{Performance correlation after fine-tuning.} The null-space ratio is closely related to model performance after fine-tuning. \Cref{fig:nsc_acc_quantile} shows its relationship to classification accuracy. For each task, we partition test samples into four quantiles by their ratio and report the average accuracy within each quantile across tasks. In the \emph{fine-tuned model} (left), samples with lower ratios achieve higher accuracy. A similar pattern holds in the \emph{merged model} (right), where performance remains inversely correlated with the ratio in a merged model with task arithmetic~\cite{ilharco2022editing}. Even after merging, samples that interact more strongly with their adapters (lower null-space ratios) show the same inverse correlation.

\subsection{Null-Space Compression Merging}
Motivated by the observed inverse correlation between null-space compression and task performance, we use the null-space ratio as a label-free learning signal to estimate merge coefficients and propose \emph{Null-Space Compression (NSC) Merging}. Unlike applying gradient-based signals when merging full-rank fine-tuned networks, which is memory prohibitive, LoRA merging updates only lightweight adapters, so the extra memory and compute are modest, making the approach feasible in practice.

We compute the mean null-space ratio across all LoRA-equipped layers during inference as
\begin{equation}
	\Omega_k(\bm x; \bm \Theta)
	=
	\frac{1}{|\mathcal{J}|}
	\sum_{\ell \in \mathcal{J}}
	\omega_k^{\ell}\!\left(
	\bm z^{\ell}(\bm x; \bm \Theta)
	\right),
	\label{eq:mean_null_ratio}
\end{equation}
and estimate its expectation $\mathbb{E}_{\bm x}[\Omega_k(\bm x; \bm \Theta)]$ over unlabeled samples. Finally, we propose objectives for learning the merging coefficients.

\vspace{2pt}
\noindent\textbf{Optimization Objective.}
We minimize the null-space ratio as a task-agnostic surrogate loss for optimizing model merging coefficients. Precisely, the NSC objective is formulated as
\begin{equation}
	\begin{aligned}
		 & \min_{\{\lambda_k^{\ell}\}}
		\frac{1}{K} \sum_{k=1}^K \left(
		\mathop{\mathbb{E}}\limits_{
			\substack{
				\bm{x} \sim \mathcal{D}_k
			}
		}
		\Big[
			\Omega_k\big(\bm{x}; \bm{\Theta}_{\text{merge}}\big)
		\Big] \right)                  \\[4pt]
		 & \text{s.t.} \quad
		\bm{\Theta}_{\text{merge}}
		=
		\big\{
		\bm{W}_0^{\ell}
		+
		\textstyle\sum_{k}
		\lambda_k^{\ell}
		\bm{B}_k^{\ell}
		\bm{A}_k^{\ell}
		\big\}_{\ell=1}^L
	\end{aligned}
	\label{eq:nsc_merged}
\end{equation}
where $\mathcal{D}_k$ denotes an unlabeled validation set for task $k$.

\vspace{2pt}
\noindent\textbf{Fast NSC: Caching the Adapter Gram-Inverse.}
Before optimizing the merging coefficients in the NSC objective in~\cref{eq:nsc_merged}, we pre-compute the parameters required for evaluating the null-space ratio at each LoRA-equipped layer. For a LoRA update $\Delta\bm W_k \!=\! \bm B_k \bm A_k$, the null-space ratio of a feature $\bm z$ can be expressed as
\begin{equation}
	\omega_k(\bm z)
	=
	\sqrt{
	1 -
	\frac{
	\bm z^{\top} \bm A_k^{\top} (\bm A_k \bm A_k^{\top})^{-1} \bm A_k \bm z
	}{
	\|\bm z\|_2^2
	}
	}.
	\label{eq:null_ratio_simple}
\end{equation}
This expression is equivalent to \cref{eq:null_ratio}. For clarity we write it using the down-projection matrix, and for brevity we omit the layer index $\ell$. Since $\bm z$ and $\bm A_k \bm z$ are computed during inference, the only additional term needed is $(\bm A_k \bm A_k^{\top})^{-1}$, a small square matrix of dimension equal to the LoRA rank. This avoids explicit construction of the full null-space projection matrix, greatly reducing both memory usage and computational overhead. As a result, the proposed computation enables short preparation time and efficient optimization of the merging coefficients. A detailed derivation of~\cref{eq:null_ratio_simple} is in \cref{append:gram_inverse_null_space_ratio} of supplementary material, and efficiency is analyzed in~\cref{ablation:cost}.

\vspace{2pt}
\noindent\textbf{Target Layers for Objective Computation.}
Computing the null-space ratio at every LoRA-equipped layer during inference incurs computational and memory overhead.
Meanwhile, optimization based on the null-space ratio from later transformer blocks and projection layers still influences the merging coefficients of earlier layers.
Hence, we analyze effective target layers for computing the NSC objective.
Formally, rather than computing $\Omega_k$ across the entire set $\mathcal{J}$, restricting computation to a smaller subset $\mathcal{J}_\text{tgt} \subseteq \mathcal{J}$ offers a better trade-off between efficiency and performance.
In practice, we use only the last quarter of transformer blocks for computing the objective. Experiments supporting this choice are provided in \cref{subsec:ablation}.

\vspace{2pt}
\noindent\textbf{Advantages over Entropy Minimization.}
NSC merging extends gradient-based model merging by relying solely on the structural information of LoRA parameters, making it label-free and suitable for heterogeneous tasks where entropy-based objectives are ill-defined.
For LLMs and VLMs performing next-token prediction, entropy objectives must be evaluated at every generated token, causing cost to scale with sequence length and makes the algorithm not scalable.
In contrast, NSC is \emph{input}-oriented: it estimates merge weights from initial input prompts or image tokens without relying on \emph{output} logits, keeping its cost independent of generated sequence length.
Its ability to adapt using only input IDs is demonstrated in~\cref{subsubsec:vlm_result}, and a detailed cost analysis in~\cref{ablation:cost}.
Overall process of NSC merging is summarized in~\cref{alg:nsc_merging_pseudocode}.

\begin{algorithm}[t]
	\caption{NSC Merging}
	\label{alg:nsc_merging_pseudocode}
	\begin{algorithmic}[1]
		\Require Pretrained $\{\bm W_0^{\ell}\}_{\ell=1}^{L}$; LoRA $\{(\bm B_k^{\ell},\bm A_k^{\ell})\}$ for $k{=}1{:}K$, unlabeled validation set $\{\mathcal{D}_k\}$; target layers $\mathcal{J}_{\mathrm{tgt}}\!\subseteq\!\mathcal{J}$; steps $T$; batch size $b$; learning rate $\eta$
		\Ensure Coefficients $\{\lambda_k^{\ell}\}$, merged model $\bm\Theta_{\mathrm{merge}} $

		\Statex \textbf{Step 1: Prepare adapter Gram-inverse}
		\For{$k=1$ to $K$}
		\For{$\ell \in \mathcal{J}_{\mathrm{tgt}}$}
		\State Compute and cache $\left(\bm A_k^{\ell}\bm A_k^{\ell\top}\right)^{-1}$
		\EndFor
		\EndFor

		\Statex \textbf{Step 2: Optimize coefficients}
		\State Initialize $\lambda_k^{\ell}\quad \forall\,k,\ell \in \mathcal{J}$
		\For{$t=1$ to $T$}
		\State $\bm\Theta_{\mathrm{merge}} \gets \big\{\bm{W}_0^{\ell} + \textstyle\sum_{k} \lambda_k^{\ell} \bm{B}_k^{\ell} \bm{A}_k^{\ell}\big\}_{\ell=1}^L$
		\For{$k=1$ to $K$}
		\State Sample $\{\bm x_i\}_{i=1}^{b} \sim \mathcal{D}_k$ \Comment{Unlabeled mini-batch}
		\State $\widehat{\mathcal{L}}_k \gets \frac{1}{b}\sum_{i=1}^{b}\Omega_k(\bm x_i;\bm\Theta_{\mathrm{merge}})$
		\EndFor
		\State $\widehat{\mathcal{L}} \gets \frac{1}{K}\sum_{k=1}^{K}\widehat{\mathcal{L}}_k$
		\State $\lambda_k^{\ell} \gets \lambda_k^{\ell} - \eta\,\frac{\partial \widehat{\mathcal{L}}}{\partial \lambda_k^{\ell}} \quad \forall\,k,\ell \in \mathcal{J}$
		\EndFor
		\State \Return $\{\lambda_k^{\ell}\}, \bm\Theta_{\mathrm{merge}}$
	\end{algorithmic}
\end{algorithm}

\section{Experiments}
\subsection{Experimental Setup}
\noindent
\textbf{Datasets and Tasks.}
We evaluate across three settings: a heterogeneous vision setting with ViT-based multi-task learning, an LLM sequence classification setting, and a VLM vision–language setting. For vision, we use three multi-task datasets spanning indoor and outdoor scenes.
NYUD-v2~\citep{silberman2012indoor} provides four tasks: depth estimation, semantic segmentation, surface normal prediction, and edge detection. PASCAL-Context~\citep{mottaghi2014role} includes five tasks: semantic segmentation, human parts estimation, saliency estimation, surface normal prediction, and edge detection. Taskonomy~\citep{zamir2018taskonomy} covers eleven tasks: Depth Euclidean (DE), Depth Z-buffer (DZ), Edge Texture (ET), Keypoints 2D (K2), Keypoints 3D (K3), Normal (N), Principal Curvature (C), Reshading (R), Segment Unsup2D (S2), and Segment Unsup2.5D (S2.5).

For large language models (LLMs), we use \textsc{LLaMA-3-8B} \cite{dubey2024llama} on six natural language inference (NLI) datasets: MNLI, QNLI, SNLI, RTE, SICK, and SciTail. For vision-language models (VLMs), we employ \textsc{LLaVA-1.5-7B} \cite{liu2023visual, liu2024improved} across a diverse set of multimodal tasks, ranging from word-level visual question answering to sentence-level generation. Specifically, we evaluate on VizWiz, IconQA, ChartQA, DocVQA, COCO, and Flickr30k.
Further details on the datasets and data splits are provided in supple~\cref{append:datasets}.

\begin{table*}[t]
	\vspace{-10pt}
	\caption{Merging results on NYUD-v2, PASCAL-Context, and Taskonomy using ViT-B. The fine-tuned rows report per-task absolute performance. For each merging baseline, we report the normalized performance (\%) relative to the corresponding fine-tuned score. The rightmost \textit{Avg} column is the mean of the normalized scores across tasks.}
	\vspace{-5pt}
	\centering
	\renewcommand\arraystretch{1.05}
	\setlength{\tabcolsep}{2pt}
	\resizebox{0.99\textwidth}{!}{
		\begin{tabular}{l|
				cccc| 
				ccccc| 
				ccccccccccc| 
				c} 
			\toprule
			 & \multicolumn{4}{c|}{\textbf{NYUD-v2 (4)}}
			 & \multicolumn{5}{c|}{\textbf{PASCAL-Context (5)}}
			 & \multicolumn{11}{c|}{\textbf{Taskonomy (11)}}
			 & \multirow{3}{*}{\textbf{Avg}}                                                                                                                                                                                                 \\
			\cmidrule(lr){2-5}\cmidrule(lr){6-10}\cmidrule(lr){11-21}
			\multirow{2}{*}{\textbf{Method}}
			 & Depth                                            & Semseg & Normal   & Edge
			 & Semseg                                           & Parts  & Saliency & Normal & Edge
			 & DE                                               & DZ     & EO       & ET     & K2    & K3    & N     & C     & R     & S2    & S2.5
			 &                                                                                                                                                                                                                               \\
			 & \multicolumn{1}{c}{RMSE $\downarrow$}
			 & \multicolumn{1}{c}{mIoU $\uparrow$}
			 & \multicolumn{1}{c}{Mean $\downarrow$}
			 & \multicolumn{1}{c|}{L1 $\uparrow$}
			 & \multicolumn{1}{c}{mIoU $\uparrow$}
			 & \multicolumn{1}{c}{mIoU $\uparrow$}
			 & \multicolumn{1}{c}{mIoU $\uparrow$}
			 & \multicolumn{1}{c}{Mean $\downarrow$}
			 & \multicolumn{1}{c|}{L1 $\uparrow$}
			 & \multicolumn{1}{c}{L1 $\downarrow$}
			 & \multicolumn{1}{c}{L1 $\downarrow$}
			 & \multicolumn{1}{c}{L1 $\downarrow$}
			 & \multicolumn{1}{c}{L1 $\downarrow$}
			 & \multicolumn{1}{c}{L1 $\downarrow$}
			 & \multicolumn{1}{c}{L1 $\downarrow$}
			 & \multicolumn{1}{c}{L1 $\downarrow$}
			 & \multicolumn{1}{c}{RMSE $\downarrow$}
			 & \multicolumn{1}{c}{L1 $\downarrow$}
			 & \multicolumn{1}{c}{L1 $\downarrow$}
			 & \multicolumn{1}{c|}{L1 $\downarrow$}
			 &                                                                                                                                                                                                                               \\
			\midrule
			\multicolumn{22}{c}{\textit{Finetuned performance}}                                                                                                                                                                              \\ \cmidrule(lr){1-22}
			Finetuned
			 & 0.657                                            & 37.66  & 25.98    & 0.051
			 & 70.07                                            & 54.77  & 80.00    & 18.36  & 0.046
			 & 0.016                                            & 0.016  & 0.101    & 0.171  & 0.162 & 0.082 & 0.217 & 0.710 & 0.136 & 0.170 & 0.144
			 & --                                                                                                                                                                                                                            \\
			\midrule
			\multicolumn{22}{c}{\textit{Merged models, normalized to finetuned (\%)}}                                                                                                                                                        \\ \cmidrule(lr){1-22}
			TA~\cite{ilharco2022editing}
			 & 24.0                                             & 1.3    & 54.9     & 105.2  & 4.7   & 20.6  & 36.4  & 62.5  & 104.3 & 100.1 & 99.8  & 101.7 & 104.6 & 106.4 & 102.2 & 100.2 & 103.6 & 106.2 & 107.4 & 98.8  & 77.2          \\

			TIES~\cite{yadav2023ties}
			 & 28.5                                             & 1.3    & 55.2     & 105.1  & 4.7   & 20.6  & 34.6  & 66.1  & 104.3 & 102.5 & 102.0 & 100.6 & 103.7 & 105.4 & 99.5  & 100.1 & 102.4 & 105.2 & 104.3 & 99.5  & 77.3          \\

			DARE-TIES~\cite{yu2024language}
			 & 24.3                                             & 1.3    & 54.6     & 105.2  & 4.7   & 20.6  & 32.6  & 62.2  & 104.3 & 98.1  & 97.7  & 103.3 & 104.6 & 106.4 & 104.5 & 100.1 & 103.7 & 103.9 & 104.6 & 100.6 & 76.9          \\

			SVD~\cite{tang2025lora}
			 & 6.8                                              & 1.2    & 40.6     & 105.4  & 4.8   & 20.6  & 31.2  & 49.4  & 104.3 & 98.0  & 94.4  & 103.7 & 123.4 & 146.9 & 117.8 & 101.5 & 122.2 & 143.4 & 146.7 & 96.9  & 83.0          \\

			Linear~\cite{peft}
			 & 2.6                                              & 0.5    & 30.7     & 105.2  & 0.1   & 20.5  & 28.5  & 21.4  & 102.2 & 108.3 & 105.9 & 95.9  & 120.5 & 122.1 & 117.1 & 100.4 & 98.1  & 126.4 & 75.0  & 104.9 & 74.3          \\

			KnOTS-TIES~\cite{stoica2025knots}
			 & 19.4                                             & 1.5    & 54.5     & 105.2  & 4.7   & 20.6  & 38.3  & 61.2  & 104.3 & 100.1 & 99.6  & 101.3 & 102.8 & 105.2 & 102.4 & 100.1 & 102.7 & 104.9 & 104.8 & 97.9  & 76.6          \\

			KnOTS-DARE-TIES~\cite{stoica2025knots}
			 & 18.2                                             & 1.5    & 55.4     & 105.2  & 4.7   & 20.6  & 42.5  & 59.9  & 104.3 & 103.9 & 102.8 & 97.7  & 107.8 & 107.3 & 96.1  & 100.2 & 102.4 & 103.1 & 101.7 & 100.2 & 76.8          \\

			LoRA-LEGO~\cite{zhao2025loralego}
			 & 9.3                                              & 1.4    & 52.8     & 104.3  & 4.7   & 20.6  & 36.3  & 55.4  & 104.0 & 103.9 & 104.1 & 96.7  & 106.7 & 108.7 & 99.1  & 100.2 & 103.6 & 110.4 & 102.2 & 104.5 & 76.4          \\

			EMR-Merging~\cite{huang2024emr}
			 & 21.6                                             & 1.5    & 55.3     & 103.9  & 4.7   & 20.6  & 44.9  & 58.9  & 104.0 & 98.2  & 98.2  & 102.4 & 103.0 & 104.8 & 103.2 & 100.1 & 103.4 & 105.4 & 108.1 & 98.5  & 77.0          \\

			FR-Merging~\cite{zheng2025free}
			 & 6.3                                              & 1.2    & 37.9     & 105.4  & 4.8   & 20.6  & 31.0  & 48.2  & 104.3 & 99.7  & 95.6  & 101.5 & 125.8 & 150.6 & 117.0 & 101.7 & 122.7 & 146.6 & 149.7 & 97.3  & 83.4          \\

			RobustMerge~\cite{zeng2025parameter}
			 & 37.2                                             & 76.5   & 63.4     & 100.1  & 83.1  & 73.7  & 85.7  & 73.7  & 100.6 & 100.0 & 101.0 & 102.2 & 100.6 & 100.5 & 100.6 & 100.0 & 100.2 & 99.7  & 99.1  & 99.8  & 89.9          \\

			\rowcolor[gray]{0.9}
			NSC (Ours)
			 & 45.9                                             & 85.1   & 69.6     & 100.4  & 86.7  & 76.7  & 88.7  & 80.9  & 101.7 & 100.1 & 101.0 & 102.2 & 100.8 & 100.8 & 100.7 & 100.0 & 100.4 & 100.0 & 99.6  & 99.6  & \textbf{92.0} \\
			\bottomrule
		\end{tabular}}
	\label{tab:nyud_pascal_taskonomy}
	\vspace{-5pt}
\end{table*}

\vspace{2pt}
\noindent\textbf{Baselines.}
For simplicity, we omit layer-wise index $\ell$ for this section. Let $\bm W_0$ denote the pretrained parameters, $\Delta \bm W_k$ the task vector for task $k$, and $\bm W_{\text{merge}}$ the merged parameters. For LoRA adapters we write $\Delta \bm W_k = \bm B_k \bm A_k$, and $\lambda$ denotes a global scaling coefficient. We evaluate two families of methods.
\emph{Vanilla merging} operates in full parameter space.
(\romannumeral 1) \textbf{Task Arithmetic (TA)}~\cite{ilharco2022editing} adds a scaled sum of task vectors to the pretrained weights, i.e., $\bm W_{\text{merge}}=\bm W_0+\lambda\sum_k \Delta \bm W_k$.
(\romannumeral 2) \textbf{TIES}~\cite{yadav2023ties} prunes low-magnitude coordinates, enforces sign consensus across models, then averages only the retained coordinates with scale $\lambda$.
(\romannumeral 3) \textbf{DARE-TIES}~\cite{yu2024language} applies Bernoulli sparsification with probability $p$ and rescales by $1/(1-p)$ to preserve the expectation.
(\romannumeral 4) \textbf{AdaMerging}~\cite{yang2023adamerging} learns merging coefficients by minimizing an output-entropy surrogate in the spirit of test-time adaptation~\cite{wang_tent_2020}.
One line of works exploit the low-rank structure of adapters.
(\romannumeral 5) \textbf{SVD}~\cite{tang2025lora} aggregates LoRA task vectors $\Delta \bm W_{\text{merge}}=\lambda\sum_k \bm B_k \bm A_k$, applies truncated SVD to the sum, and refactors the result back into LoRA form to recover the desired rank.
(\romannumeral 6) \textbf{Linear}~\cite{peft} performs TA directly on the factors by linearly combining $\{\bm A_k\}$ and $\{\bm B_k\}$ instead of whole task vectors.
(\romannumeral 7) \textbf{KnOTS}~\cite{stoica2025knots} computes an SVD over the concatenation of task vectors and merges by assigning the right-singular components to tasks; applying TIES or DARE-TIES to these components yields \textbf{KnOTS-TIES} and \textbf{KnOTS-DARE-TIES}.
(\romannumeral 8) \textbf{LoRA-LEGO}~\cite{zhao2025loralego} decomposes adapters into semantic units, clusters them rank-wise, and assembles a new adapter from cluster centroids.
Additionally, we evaluate (\romannumeral 9) \textbf{EMR-Merging}~\cite{huang2024emr}, (\romannumeral 10) \textbf{FR-Merging}~\cite{zheng2025free}, and (\romannumeral 11) \textbf{RobustMerge}~\cite{zeng2025parameter}.
Implementation details for all baselines appear in \cref{append:imple} of supplementary material.

\vspace{2pt}
\noindent
\textbf{Evaluation Metrics.}
We report \emph{normalized performance} relative to a finetuned reference model.
For higher-is-better metrics, a method’s score is divided by the finetuned model’s score.
For lower-is-better metrics, the finetuned score is divided by the method’s score.
Each task uses its conventional metric from the literature.
See supple~\ref{append:metrics} for detailed information.

\begin{table}[t]
	\caption{
		Per-task results on six NLI benchmarks. We merge six \textsc{LLaMA-3 8B} checkpoints, each fine-tuned with LoRA (rank 16). The top panel reports absolute accuracies of the fine-tuned models. The bottom panel reports accuracies of the merged models, normalized to their corresponding fine-tuned baselines (\%).
	}
	\label{tab:six_nli_benchmarks_rank16_test}
	\vspace{-4pt}
	\renewcommand{\arraystretch}{1.0}
	\centering
	\scriptsize
	\setlength{\tabcolsep}{2pt}
	\resizebox{0.99\linewidth}{!}{
		\begin{tabular}{l|cccccc|c}
			\toprule
			\textbf{Method}                        & \textbf{MNLI} & \textbf{QNLI} & \textbf{SNLI} & \textbf{RTE} & \textbf{SICK} & \textbf{SciTail} & \textbf{Avg}  \\
			\midrule
			\multicolumn{8}{c}{\textit{Per-task absolute accuracy (\%)}}                                                                                             \\
			\midrule
			Finetuned                              & 90.8          & 94.9          & 91.8          & 87.0         & 90.9          & 94.8             & 91.7          \\
			\midrule
			\multicolumn{8}{c}{\textit{Merged models (normalized to finetuned baselines, \%)}}                                                                       \\
			\midrule
			TA~\cite{ilharco2022editing}           & 92.8          & 86.8          & 93.3          & 93.6         & 83.8          & 95.0             & 90.9          \\
			TIES~\cite{yadav2023ties}              & 94.3          & 88.8          & 90.8          & 89.8         & 86.6          & 94.4             & 90.8          \\
			DARE-TIES~\cite{yu2024language}        & 94.3          & 88.8          & 90.8          & 89.8         & 86.6          & 94.4             & 90.8          \\
			SVD~\cite{tang2025lora}                & 95.4          & 88.6          & 92.8          & 93.7         & 80.0          & 92.4             & 90.5          \\
			Linear~\cite{peft}                     & 96.0          & 86.1          & 88.1          & 94.4         & 83.8          & 96.1             & 90.8          \\
			KnOTS-TIES~\cite{stoica2025knots}      & 92.0          & 82.0          & 94.9          & 92.1         & 80.2          & 95.3             & 89.4          \\
			KnOTS-DARE-TIES~\cite{stoica2025knots} & 91.8          & 82.3          & 94.9          & 91.3         & 80.1          & 95.4             & 89.3          \\
			LoRA-LEGO~\cite{zhao2025loralego}      & 87.7          & 85.1          & 90.9          & 92.1         & 86.1          & 95.6             & 89.6          \\
			EMR-Merging~\cite{huang2024emr}        & 96.2          & 88.0          & 94.2          & 92.9         & 76.6          & 94.5             & 90.4          \\
			FR-Merging~\cite{zheng2025free}        & 95.5          & 88.7          & 93.1          & 93.6         & 80.3          & 92.6             & 90.6          \\
			RobustMerge~\cite{zeng2025parameter}   & 94.3          & 88.1          & 93.7          & 93.6         & 83.0          & 94.5             & 91.2          \\
			AdaMerging~\cite{yang2023adamerging}   & 94.3          & 84.8          & 92.5          & 92.1         & 89.2          & 84.8             & 89.6          \\
			\rowcolor[gray]{0.9}
			NSC (Ours)                             & 94.9          & 88.3          & 92.8          & 91.3         & 91.2          & 95.1             & \textbf{92.3} \\
			\bottomrule
		\end{tabular}}
	\vspace{-10pt}
\end{table}

\vspace{2pt}
\noindent
\noindent\textbf{Implementation Details.}
We use AdamW \cite{loshchilov2018decoupled} with a learning rate of 0.001. The merging weights $\lambda$ are initialized to 0.4.
For heterogeneous vision tasks, we optimize merging weights $\lambda$ for 100 iterations with a batch size of 32.
For LLM and VLM experiments, we use a batch size of 2 and 1, respectively, for 500 iterations with a learning rate of 0.0003.
We compare against ten baselines and follow the experimental protocols reported in their original papers.
For TA \cite{ilharco2022editing}, TIES \cite{yadav2023ties}, and KnOTS \cite{stoica2025knots}, we set a global merge scale $\lambda$ and tune it on a small validation split using validation loss.
For AdaMerging \cite{yang2023adamerging}, all merging coefficients $\{\lambda_k^{\,\ell}\}_{k=1,\ell=1}^{K,L}$ are learned with the entropy surrogate, where $N$ is the number of tasks and $L$ is the number of layers.
Further details about the algorithm and fine-tuning details are provided in~\cref{append:finetune} of supplementary material.

\begin{table*}[t]
	\centering
	\caption{
		Per-task performance of merged \textsc{LLaVA-1.5-7B} models across six multi-modal benchmarks. Each model fine-tunes the Vicuna-7B language backbone with LoRA \citep{hu2022lora} (rank = 16), while fully fine-tuning the multi-modal projector. The upper block reports absolute accuracies of fine-tuned baselines; the lower block presents merged models’ results normalized to their corresponding baselines (\%).
	}
	\vspace{-5pt}
	\label{tab:six_llava_long2_benchmarks_rank16}
	\renewcommand{\arraystretch}{0.9}
	\setlength{\tabcolsep}{8pt}
	\scriptsize
	\resizebox{0.90\textwidth}{!}{
		\begin{tabular}{l|cccccc|c}
			\toprule
			\textbf{Method}                                     & \textbf{IconQA} & \textbf{VizWiz$_\text{val}$} & \textbf{ChartQA} & \textbf{DocVQA$_\text{val}$} & \textbf{COCO} & \textbf{Flickr30k} & \textbf{Avg}  \\
			\midrule
			\multicolumn{8}{c}{\textit{Per-task absolute score}}                                                                                                                                                        \\
			\midrule
			Zero-Shot                                           & 17.9            & 55.2                         & 18.2             & 24.3                         & 109.5         & 79.2               & --            \\
			Finetuned                                           & 67.8            & 69.3                         & 39.0             & 40.7                         & 130.5         & 91.3               & --            \\
			Avg. generated tokens                               & 2.1             & 2.8                          & 4.7              & 6.7                          & 12.5          & 14.2               & 7.17          \\
			\midrule
			\multicolumn{8}{c}{\textit{Merged models (normalized to fine-tuned baselines, \%)}}                                                                                                                         \\
			\midrule
			TA~\cite{ilharco2022editing}                        & 56.8            & 83.2                         & 73.0             & 85.2                         & 92.9          & 97.6               & 81.4          \\
			TIES~\cite{yadav2023ties}                           & 61.2            & 81.6                         & 76.1             & 82.3                         & 92.9          & 96.7               & 81.8          \\
			DARE-TIES~\cite{yu2024language}                     & 57.4            & 84.4                         & 72.8             & 84.4                         & 92.5          & 95.8               & 81.2          \\
			SVD~\cite{tang2025lora}                             & 56.2            & 82.7                         & 72.9             & 85.5                         & 92.6          & 96.6               & 81.1          \\
			Linear~\cite{peft}                                  & 52.6            & 89.8                         & 70.0             & 79.2                         & 87.0          & 95.9               & 79.1          \\
			KnOTS-TIES~\cite{stoica2025knots}                   & 49.8            & 84.1                         & 70.4             & 85.5                         & 92.3          & 97.2               & 79.9          \\
			KnOTS-DARE-TIES~\cite{stoica2025knots}              & 52.4            & 86.1                         & 72.6             & 85.3                         & 93.0          & 96.3               & 80.9          \\
			LoRA-LEGO~\cite{zhao2025loralego}                   & 53.1            & 86.5                         & 68.4             & 82.5                         & 91.5          & 96.6               & 79.8          \\
			EMR-Merging~\cite{huang2024emr}                     & 84.0            & 49.2                         & 65.6             & 84.3                         & 92.0          & 97.1               & 78.7          \\
			FR-Merging~\cite{zheng2025free}                     & 87.2            & 46.3                         & 61.5             & 76.2                         & 90.2          & 94.8               & 76.0          \\
			RobustMerge~\cite{zeng2025parameter}                & 82.3            & 58.2                         & 71.1             & 82.1                         & 93.6          & 96.9               & 80.7          \\
			AdaMerging~\cite{yang2023adamerging} (Single Token) & 64.7            & 76.0                         & 76.4             & 82.1                         & 87.5          & 98.4               & 80.9          \\
			AdaMerging~\cite{yang2023adamerging} (Full Token)   & 68.7            & 76.2                         & 77.9             & 85.8                         & 91.1          & 94.4               & 82.4          \\
			\rowcolor[gray]{0.9}
			NSC (Ours)                                          & 59.7            & 82.9                         & 78.1             & 87.1                         & 91.7          & 96.8               & \textbf{82.7} \\
			\bottomrule
		\end{tabular}}
	\vspace{-6pt}
\end{table*}

\subsection{Experimental Results}

\subsubsection{Results across 20 heterogeneous vision tasks}
To test that our method works well on both classification and regression, 
we test on 20 tasks from NYUD-v2 (4), PASCAL-Context (5), and Taskonomy (11), encompassing diverse dense predictions like semantic segmentation, depth, and normals.
\Cref{tab:nyud_pascal_taskonomy} reports normalized per-task scores for a ViT-B backbone. Gradient-free methods collapse on dense prediction in NYUD-v2 and PASCAL-Context. For example, TA, TIES, and DARE-TIES achieve only about 1–2\% on NYUD-v2 semantic segmentation, whereas NSC reaches 85.1\%. On PASCAL-Context, vanilla methods stay near 20–21\% on Parts and about 33–38\% on Saliency, while NSC attains 76.7\% and 88.7\% respectively.
SVD and Linear improve several Taskonomy metrics but suffer pronounced drops on others (e.g., NYUD-v2 Depth), lowering their overall averages.
This indicates that prior gradient-free approaches have clear limitations when merging models for heterogeneous tasks. NSC provides the most balanced outcome. It stays near parity on all eleven Taskonomy tasks (about 99.6–102.2\%) and markedly outperforms alternatives on NYUD-v2 and PASCAL-Context. This balance yields the best average normalized score at 92.0\% across twenty tasks, indicating strong retention of task-specific performance in heterogeneous settings.

\subsubsection{Per-task evaluation across six NLI tasks}
To assess the scalability of merging methods, we report per-task normalized accuracies on six NLI benchmarks. In \cref{tab:six_nli_benchmarks_rank16_test}, we fine-tune \textsc{LLaMA-3 8B} with LoRA adapters of rank 16 applied to the query and value projections.
Gradient-free baselines show limited ability to preserve task-specific information at this scale.
AdaMerging uses a pseudo-entropy surrogate by treating token-level probabilities as a label-free signal, yet it underperforms both gradient-free baselines and our method, which indicates that entropy-based objectives are not uniformly effective for LLMs. In contrast, NSC achieves consistent gains across all tasks and attains the best average normalized accuracy of 92.3\%, demonstrating the effectiveness of our approach on language benchmarks.

\subsubsection{Per-task evaluation across six VLM tasks}
\label{subsubsec:vlm_result}
We evaluate the scalability of NSC merging for sequence generation on six vision–language benchmarks in \cref{tab:six_llava_long2_benchmarks_rank16}. VLMs such as LLaVA~\cite{liu2023visual, liu2024improved} decode outputs token by token. Classification prompts often terminate after a single token, whereas captioning tasks need long sequences. AdaMerging must compute a entropy loss at each generated token, so cost grows with sequence length. To reflect this, we consider two AdaMerging settings as baselines: \textit{Single Token}, which updates with the entropy of only the first generated token, and \textit{Full Token}, which updates with sequence-level entropy over all generated tokens. NSC leverages the LoRA projection matrix’s null-space ratio, a structural signal independent of token logits, so even a single generated token provides a sufficient gradient signal. Its computational cost therefore matches the \textit{Single Token} setting. NSC consistently outperforms AdaMerging (\textit{Single Token}) and matches or slightly exceeds \textit{Full Token} across tasks. Given the reported sequence lengths, \textit{Full Token} requires on average about seven times more gradient updates than NSC to reach similar accuracy. A detailed analysis of computational cost is provided in \cref{ablation:cost}.

\begin{table*}[t]
	\caption{Experiments on generalization to unseen tasks, including additional baselines. Merging results on NYUD-v2, PASCAL-Context, and Taskonomy with a ViT-B backbone. Fine-tuned rows report per-task absolute performance. For each baseline, we report performance normalized (\%) to the corresponding fine-tuned score, and we also report averages over seen tasks, unseen tasks, and all tasks.} \vspace{-5pt}
	\centering
	\renewcommand\arraystretch{1.25}
	\resizebox{0.99\textwidth}{!}{
		\setlength{\tabcolsep}{2pt}
		\begin{tabular}{l|
				cc|cc|cccccc|c|
				cc|ccc|ccccc|c|c}
			\toprule
			 & \multicolumn{10}{c|}{\textbf{Seen}}   & \multirow{3}{*}{\shortstack{\textbf{Avg}                                                                                                                                                                                                                                                                                                                                            \\\textbf{(Seen)}}}
			 & \multicolumn{10}{c|}{\textbf{Unseen}} & \multirow{3}{*}{\shortstack{\textbf{Avg}                                                                                                                                                                                                                                                                                                                                            \\\textbf{(Unseen)}}} & \multirow{3}{*}{\textbf{Avg}} \\
			\cmidrule(lr){2-11}\cmidrule(lr){13-22}
			 & \multicolumn{2}{c|}{\textbf{NYUD-v2}} & \multicolumn{2}{c|}{\textbf{PASCAL-Context}} & \multicolumn{6}{c|}{\textbf{Taskonomy}}
			 &                                       & \multicolumn{2}{c|}{\textbf{NYUD-v2}}        & \multicolumn{3}{c|}{\textbf{PASCAL-Context}} & \multicolumn{5}{c|}{\textbf{Taskonomy}} &                   &                                                                                                                                                                                                                         \\
			\cmidrule(lr){2-3}\cmidrule(lr){4-5}\cmidrule(lr){6-11}\cmidrule(lr){13-14}\cmidrule(lr){15-17}\cmidrule(lr){18-22}
			\multirow{2}{*}{\textbf{Method}}
			 & Semseg                                & Edge                                         & Parts                                        & Sal                                     & DZ                & ET              & K3              & N               & R               & S2.5
			 &                                       & Depth                                        & Normal                                       & Semseg                                  & Normal            & Edge            & DE              & EO              & K2              & C                 & S2
			 &                                       &                                                                                                                                                                                                                                                                                                                                                                                     \\
			 & mIoU $\uparrow$                       & L1 $\uparrow$                                & mIoU $\uparrow$                              & mIoU $\uparrow$                         & L1 $\downarrow$   & L1 $\downarrow$ & L1 $\downarrow$ & L1 $\downarrow$ & L1 $\downarrow$ & L1 $\downarrow$
			 &                                       & RMSE $\downarrow$                            & Mean $\downarrow$                            & mIoU $\uparrow$                         & Mean $\downarrow$ & L1 $\uparrow$   & L1 $\downarrow$ & L1 $\downarrow$ & L1 $\downarrow$ & RMSE $\downarrow$ & L1 $\downarrow$
			 &                                       &                                                                                                                                                                                                                                                                                                                                                                                     \\
			\midrule
			\multicolumn{23}{c}{\textit{Finetuned performance (absolute)}}                                                                                                                                                                                                                                                                                                                                                                 \\ \cmidrule(lr){1-24}
			Finetuned
			 & 37.66                                 & 0.051                                        & 54.77                                        & 80.00                                   & 0.0160            & 0.1713          & 0.0820          & 0.2169          & 0.1357          & 0.1435
			 & --
			 & 0.657                                 & 25.98                                        & 70.07                                        & 18.36                                   & 0.046             & 0.016           & 0.101           & 0.162           & 0.710           & 0.170
			 & --                                    & --                                                                                                                                                                                                                                                                                                                                                                                  \\
			\midrule
			\multicolumn{23}{c}{\textit{Merged models, normalized to finetuned (\%)}}                                                                                                                                                                                                                                                                                                                                                      \\ \cmidrule(lr){1-24}
			TA~\cite{ilharco2022editing}
			 & 1.3                                   & 102.0                                        & 20.6                                         & 30.8                                    & 91.9              & 120.9           & 115.0           & 100.8           & 134.7           & 106.1             & 82.4            & 6.6  & 43.5 & 4.7  & 47.0 & 104.3 & 95.1  & 103.7 & 132.3 & 118.6 & 111.7 & 76.8          & 79.6          \\

			TIES~\cite{yadav2023ties}
			 & 1.3                                   & 104.2                                        & 20.6                                         & 28.7                                    & 99.6              & 111.4           & 105.2           & 100.6           & 127.9           & 105.7             & 80.5            & 9.6  & 49.0 & 4.7  & 51.4 & 104.3 & 103.5 & 98.9  & 123.2 & 113.2 & 105.4 & 76.3          & 78.4          \\

			DARE-TIES~\cite{yu2024language}
			 & 1.2                                   & 105.3                                        & 20.6                                         & 33.6                                    & 89.4              & 111.4           & 118.3           & 100.5           & 117.9           & 100.9             & 79.9            & 7.3  & 49.8 & 4.7  & 53.5 & 104.3 & 88.1  & 106.4 & 128.6 & 122.0 & 105.9 & 77.1          & 78.5          \\

			SVD~\cite{tang2025lora}
			 & 1.4                                   & 105.1                                        & 20.6                                         & 32.6                                    & 96.8              & 107.2           & 106.2           & 100.2           & 110.4           & 100.0             & 78.1            & 14.3 & 49.7 & 4.7  & 52.9 & 104.3 & 98.2  & 103.3 & 110.8 & 107.1 & 105.3 & 75.1          & 76.6          \\

			Linear~\cite{peft}
			 & 0.4                                   & 73.4                                         & 20.3                                         & 29.0                                    & 117.7             & 226.8           & 95.6            & 99.6            & 124.3           & 25.9              & 81.3            & 1.5  & 29.1 & 0.1  & 23.5 & 102.0 & 130.3 & 87.9  & 190.4 & 95.6  & 28.9  & 68.9          & 75.1          \\

			KnOTS-TIES~\cite{stoica2025knots}
			 & 1.3                                   & 80.3                                         & 20.6                                         & 28.5                                    & 96.9              & 119.9           & 108.7           & 100.4           & 116.0           & 99.1              & 77.2            & 5.2  & 43.5 & 4.7  & 44.5 & 104.3 & 99.5  & 104.3 & 121.7 & 110.0 & 105.7 & 74.3          & 75.8          \\

			KnOTS-DARE-TIES~\cite{stoica2025knots}
			 & 1.2                                   & 104.2                                        & 20.6                                         & 28.5                                    & 101.6             & 115.9           & 118.3           & 100.4           & 103.0           & 85.8              & 77.9            & 4.8  & 47.9 & 4.7  & 48.2 & 104.3 & 97.5  & 105.5 & 117.8 & 106.2 & 122.8 & 76.0          & 77.0          \\

			LoRA-LEGO~\cite{zhao2025loralego}
			 & 1.2                                   & 104.7                                        & 20.6                                         & 33.5                                    & 97.2              & 102.1           & 105.5           & 100.2           & 109.7           & 98.5              & 77.3            & 9.0  & 48.9 & 4.7  & 53.0 & 104.3 & 98.8  & 102.2 & 106.9 & 104.8 & 102.8 & 73.5          & 75.4          \\

			EMR-Merging~\cite{huang2024emr}
			 & 1.3                                   & 88.3                                         & 20.6                                         & 36.4                                    & 94.3              & 104.4           & 106.9           & 100.2           & 108.9           & 98.6              & 76.0            & 15.9 & 53.1 & 4.7  & 56.9 & 104.1 & 95.2  & 103.5 & 109.2 & 107.3 & 108.2 & 75.8          & 75.9          \\

			FR-Merging~\cite{zheng2025free}
			 & 1.3                                   & 104.0                                        & 20.6                                         & 31.4                                    & 92.9              & 121.0           & 113.0           & 100.8           & 134.9           & 106.2             & 82.6            & 6.8  & 45.1 & 4.7  & 47.3 & 104.3 & 96.3  & 101.4 & 130.5 & 117.2 & 110.5 & 76.4          & 79.5          \\

			RobustMerge~\cite{zeng2025parameter}
			 & 76.5                                  & 100.1                                        & 73.6                                         & 85.7                                    & 101.0             & 100.6           & 100.6           & 100.0           & 99.7            & 99.8              & 93.8            & 37.2 & 63.4 & 83.1 & 73.7 & 100.6 & 100.0 & 102.2 & 100.5 & 100.2 & 99.1  & 86.0          & 89.9          \\

			\rowcolor[gray]{0.9}
			NSC (Ours)
			 & 84.6                                  & 100.3                                        & 76.8                                         & 87.3                                    & 100.9             & 100.6           & 100.8           & 100.0           & 99.7            & 99.7              & \textbf{95.1}   & 40.8 & 65.0 & 85.4 & 76.0 & 101.2 & 100.0 & 102.3 & 100.6 & 100.3 & 99.3  & \textbf{87.1} & \textbf{91.1} \\
			\bottomrule
		\end{tabular}}
	\label{tab:nyud_pascal_taskonomy_seen_unseen_grouped}
	\vspace{-7pt}
\end{table*}

\subsubsection{Generalization to unseen tasks}
We evaluate generalization in a seen–unseen protocol inspired by AdaMerging~\cite{yang2023adamerging}. Prior work considered eight classification tasks. We expand the setting to twenty heterogeneous vision tasks that include regression. We select ten seen tasks and ten unseen tasks distributed across NYUD-v2, PASCAL-Context, and Taskonomy, and report per-task scores along with averages over seen, unseen, and all tasks.

\Cref{tab:nyud_pascal_taskonomy_seen_unseen_grouped} summarizes the results. Gradient-free methods struggle to preserve performance on unseen objectives and dense prediction. Their averages remain modest on both splits (e.g., TA 82.4\% seen and 76.8\% unseen, TIES 80.5\% seen and 76.3\% unseen). LoRA-aware methods can post strong performances on subsets of Taskonomy but remain unstable when moving to NYUD-v2 and PASCAL-Context. Linear shows pronounced volatility and the lowest unseen average among LoRA-aware baselines (68.9\%), and SVD improves some Taskonomy tasks yet drops on NYUD-v2 Depth and PASCAL-Context Normal, which pulls down its averages (78.1\% seen and 75.1\% unseen).
NSC delivers consistently balanced gains. It achieves the best averages on both splits, with 95.1\% on seen tasks and 87.1\% on unseen tasks, and the highest overall average of 91.1\%.

\subsection{Ablation Study}
\label{subsec:ablation}

\begin{table}[t]
	\centering
	\caption{Ablation on where the NSC objective is applied: target module and number of blocks. Evaluated on CLIP (ViT-B/32) across eight image classification tasks.}  \vspace{-5pt}
	\label{tab:projection-blocks}
	\setlength{\tabcolsep}{2.8pt} 
	\renewcommand{\arraystretch}{0.9}
	\resizebox{0.47\textwidth}{!}{
		\begin{tabular}{lcccc}
			\toprule
			\multirow{2}{*}{\textbf{Targeted Projection Matrix}} &
			\multicolumn{4}{c}{\textbf{Number of Activated Transformer Blocks}}                                       \\
			\cmidrule(lr){2-5}
			                                                     & \textbf{12} & \textbf{6} & \textbf{3} & \textbf{1} \\
			\midrule
			QKVO                                                 & 83.7        & 84.4       & 83.9       & 83.0       \\
			KVO                                                  & 84.0        & 84.6       & 84.0       & 83.0       \\
			VO                                                   & 84.3        & 84.7       & 84.1       & 82.6       \\
			O                                                    & 84.5        & 84.9       & 84.6       & 82.6       \\
			\bottomrule
			\vspace{-10pt}
		\end{tabular}%
	}
	\vspace{-10pt}
\end{table}

\vspace{2pt}
\noindent\textbf{Robustness to target module and block count.}
To ensure the NSC objective \cref{eq:nsc_merged} influences all LoRA adapters, it is intuitive to place it toward the back of the network so that gradients flow through every upstream adapter at least once. \cref{tab:projection-blocks} ablates the attention module targeted by NSC (Q, K, V, O and their subsets) and the number of activated blocks. Targeting the output projection O gives a slight edge across depths, but the spread across QKVO, KVO, and VO is small, indicating that NSC is robust to the module choice. Under a coverage constraint that always includes the last portion with a LoRA adapter, activating a small set of mid to late blocks is sufficient.
Performance peaks at 84.9 with O on six blocks and remains strong at 84.6 with three blocks, which we adopt as the default for a better accuracy–compute trade off. Using only one block degrades accuracy, while activating all twelve is unnecessary. In practice,
applying NSC to O on a few back half blocks is both effective and efficient.

\vspace{2pt}
\noindent\textbf{Computational cost and memory.}
\label{ablation:cost}
In \cref{tab:cvpr_time_efficiency}, we compare wall-clock time and peak memory of NSC and baselines on \textsc{LLaVA} across six tasks using 500 validation samples.
We include gradient-free methods (TA, TIES, KnOTS–TIES) and the gradient-based AdaMerging.
Gradient-free methods merge parameters and tune scales on validation splits, with TIES and KnOTS–TIES add preprocessing for sign-conflict or SVD of task vectors.
In contrast, both NSC merging and AdaMerging optimizes merge coefficients via a label-free objective without repeated evaluations.
While gradient-free methods consume less GPU memory, they require longer validation due to multiple next-token predictions.
Gradient-based approaches use slightly more memory, but with LoRA adapters the difference is minor compared to full fine-tuned model merging, keeping gradient-based optimization practical.
NSC matches AdaMerging (Single Token) in cost, with only negligible overhead from projection formation in \cref{eq:null_ratio_simple}.
Applying entropy minimization to all tokens causes AdaMerging’s optimization time to scale with sequence length.
Overall, NSC achieves a favorable performance–efficiency balance and surpasses prior baselines, as shown in \cref{tab:six_llava_long2_benchmarks_rank16}.

\begin{table}[t]
	\centering
	\caption{
		Runtime and memory efficiency. Comparison of preparation, optimization, and validation time (minutes) and memory usage across gradient-free and gradient-based methods.
		Validation time \emph{excludes} grid-search overhead for gradient-free methods.
	}
	\vspace{-5pt}
	\footnotesize
	\setlength{\tabcolsep}{4pt}
	\renewcommand{\arraystretch}{1.15}
	\resizebox{0.99\linewidth}{!}{
		\begin{tabular}{l|ccc|ccc|cc}
			\toprule
			\textbf{Method}                                     &
			\multicolumn{3}{c}{\textbf{Requirements}}           &
			\textbf{Prep.}                                      & \textbf{Opt.}              & \textbf{Val.} &
			\textbf{\textit{Total}}                             & \textbf{\textit{GPU Mem.}}                                                            \\
			                                                    & \textbf{Prep.}             & \textbf{Opt.} & \textbf{Val.} &
			(min)                                               & (min)                      & (min)         & (min)         & (GB)                     \\
			\midrule
			\emph{Gradient-Free}                                &                            &               &               &               &  &  &  & \\
			TA~\cite{ilharco2022editing}                        &
			\cvprxmark                                          & \cvprxmark                 & \cvprcheck    &
			--                                                  & --                         & 16.6          & 16.6          & \textbf{14.4}            \\
			TIES~\cite{yadav2023ties}                           &
			\cvprcheck                                          & \cvprxmark                 & \cvprcheck    &
			4.5                                                 & --                         & 16.6          & 20.1          & \textbf{14.4}            \\
			KnOTS-TIES~\cite{stoica2025knots}                   &
			\cvprcheck                                          & \cvprxmark                 & \cvprcheck    &
			6.8                                                 & --                         & 16.6          & 23.4          & \textbf{14.4}            \\
			\midrule
			\emph{Gradient-Based}                               &                            &               &               &               &  &  &  & \\
			AdaMerging~\cite{yang2023adamerging} (Single Token) &
			\cvprxmark                                          & \cvprcheck                 & \cvprxmark    &
			--                                                  & 13.3                       & --            & \textbf{13.3} & 18.0                     \\
			AdaMerging (Full Token)                             &
			\cvprxmark                                          & \cvprcheck                 & \cvprxmark    &
			--                                                  & 104.0                      & --            & 104.0         & 18.0                     \\
			\rowcolor[gray]{0.9}
			\textbf{NSC (Ours)}                                 &
			\cvprcheck                                          & \cvprcheck                 & \cvprxmark    &
			$\approx$0.0                                        & 13.3                       & --            & \textbf{13.3} & 17.9                     \\
			\bottomrule
		\end{tabular}
	}
	\label{tab:cvpr_time_efficiency}
	\vspace{-10pt}
\end{table}

Additional analyses including computational cost and memory are presented in~\cref{append:more_exp} of the supplement.

\section{Conclusion}
We studied LoRA merging, where multiple adapters are combined to build a single model.
During LoRA fine-tuning, the down-projection factor $\bm A$ in $\Delta \bm W = \bm B \bm A$ defines an adapter subspace.
We define the proportion of the activation that is suppressed by this projection as the null-space ratio.
Empirically, this ratio decreases as task performance improves, providing a simple label-free signal for learning merge weights.
Building on this insight, we introduce Null-Space Compression (NSC) Merging,
a computationally efficient approach for combining multiple LoRA adapters. Across twenty heterogeneous vision tasks, six NLI tasks, and six vision–language tasks, NSC achieves balanced, state-of-the-art performance.

\vspace{4pt}
\noindent\textbf{Acknowledgements}
This research was supported by the Challengeable Future Defense Technology Research and Development Program through the Agency For Defense Development(ADD) funded by the Defense Acquisition Program Administration(DAPA) in 2026(No.915102201)

{
		\small
		\bibliographystyle{ieeenat_fullname}
		\bibliography{main}
	}


\clearpage
\setcounter{page}{1}
\maketitlesupplementary

\setcounter{section}{0}
\renewcommand{\thesection}{\Alph{section}}

\setcounter{figure}{2}
\setcounter{table}{6}

\section{Experimental Settings}
\subsection{Datasets and Tasks}
\label{append:datasets}

\subsubsection{Heterogeneous Vision Tasks}
We evaluate 20 dense prediction vision tasks with a ViT-based multi-task model across three datasets spanning indoor and outdoor scenes.

\begin{itemize}[leftmargin=*]
	\item \textbf{NYUD-v2~\citep{silberman2012indoor}.} An indoor RGB-D dataset of scenes such as living rooms, offices, and kitchens. We use four dense prediction tasks: depth estimation, semantic segmentation, surface normal prediction, and edge detection. The official split provides 795 training images and 654 test images but no validation set. In our experiments, we keep the 795 training images and randomly divide the original test split into 327 validation and 327 test images.
	\item \textbf{PASCAL-Context~\citep{mottaghi2014role}.} An extension of PASCAL VOC 2010 with dense pixel-wise annotations for natural scenes containing everyday objects. We use five tasks: semantic segmentation, human parts estimation, saliency estimation, surface normal prediction, and edge detection. The dataset provides 10,103 images with an official split of 4,998 training and 5,105 validation images but no test set. We keep the 4,998 training images and split the original validation set into 2,607 validation and 2,498 test images following~\cite{liu2018deep, wan2020super}.
	\item \textbf{Taskonomy~\citep{zamir2018taskonomy}.} A large-scale indoor dataset with diverse geometric and semantic tasks. We construct a tiny subset for training and validation by sampling 259,747 training and 35,774 validation images from the standard Taskonomy buildings, and use images from the Ihlen building as the test set with 9,007 images. We adopt eleven dense prediction tasks: Depth Euclidean (DE), Depth Z-buffer (DZ), Edge Texture (ET), Keypoints 2D (K2), Keypoints 3D (K3), Normal (N), Principal Curvature (C), Reshading (R), Segment Unsup2D (S2), and Segment Unsup2.5D (S2.5).
\end{itemize}

\subsubsection{NLI Tasks}
\label{append:nli_datasets}
We evaluate six sentence-pair classification tasks with LLaMA-3 8B.

\begin{itemize}[leftmargin=*]
	\item \textbf{QNLI~\cite{wang_glue_2018}.} Question–answering reformulated as sentence-pair classification. Each example is a question and a candidate sentence, and the label indicates whether the sentence contains the answer (binary).
	\item \textbf{MNLI~\cite{williams_broad-coverage_2018}.} A multi-genre collection of premise–hypothesis pairs for natural language inference. The goal is to predict one of three labels: \emph{entailment}, \emph{contradiction}, or \emph{neutral}.
	\item \textbf{SNLI~\cite{bowman_large_2015}.} A large corpus of human-written premise–hypothesis pairs supporting standard NLI with the same three labels as MNLI. The dataset contains roughly 570k examples.
	\item \textbf{RTE~\cite{dagan_pascal_2006,bar-haim_second_2006,giampiccolo_third_2007,dang_fourth_2009,bentivogli_fifth_nodate}.} A suite of textual entailment benchmarks consolidated in GLUE as binary \emph{entailment} versus \emph{not-entailment}. Examples are drawn from sources such as news and Wikipedia.
	\item \textbf{SICK~\cite{marelli_sick_2014}.} Sentence pairs constructed from image and video captions. Each pair has a three-way entailment label \emph{entailment}, \emph{contradiction}, or \emph{neutral} and a semantic relatedness score on a 1–5 scale. The dataset includes about 10k pairs.
	\item \textbf{SciTail~\cite{khot_scitail_2018}.} Entailment derived from science multiple-choice questions and retrieved web sentences. Each premise–hypothesis pair is labeled \emph{entails} or \emph{neutral} with 27,026 total examples \mbox{10,101} \emph{entails} and \mbox{16,925} \emph{neutral}.
\end{itemize}

\begin{table}[t]
	\caption{Vision--Language Tasks Setup.}
	\vspace{-5pt}
	\centering
	\renewcommand{\arraystretch}{1.10}
	\setlength{\tabcolsep}{10pt}
	\resizebox{0.99\linewidth}{!}{
		\begin{tabular}{l p{0.80\linewidth}}
			\toprule
			\textbf{Dataset}                  & \textbf{Provided Prompt} \\
			\midrule

			\textbf{IconQA}, \textbf{VizWiz}  &
			Answer the question using a single word.                     \\

			\midrule

			\textbf{ChartQA}, \textbf{DocVQA} &
			Answer the question using a single word or short phrase.     \\

			\midrule

			\textbf{COCO}, \textbf{Flickr30K} &
			Provide a one-sentence caption for the provided image.       \\

			\bottomrule
		\end{tabular}}
	\label{tab:vlm_finetune}
	\vspace{-5pt}
\end{table}

\subsubsection{VLM Tasks}
\label{append:vlm_datasets}

We evaluate our vision–language setup using LLaVA-1.5-7B~\cite{liu2023visual, liu2024improved}.
Our evaluation primarily follows the protocol described in~\citet{liu2024improved}.
For datasets not included in that work, we adopt the evaluation procedure from~\citet{zhang2024lmmsevalrealitycheckevaluation}.

\begin{itemize}
	\item \textbf{IconQA~\cite{lu2021iconqa}.}
	      A diagram-based question–answering benchmark that evaluates visual reasoning over abstract and textbook-style images,
	      emphasizing symbolic understanding and multi-step reasoning.

	\item \textbf{VizWiz~\cite{gurari2018vizwiz}.}
	      A visual question–answering dataset consisting of images captured by visually impaired users.
	      The questions are often open-ended and exhibit real-world visual and linguistic noise.
	      We randomly split the validation and test sets in a 2:8 ratio.

	\item \textbf{ChartQA~\cite{masry2022chartqa}.}
	      A visual question–answering dataset requiring numerical and semantic reasoning over data presented in scientific charts and plots.

	\item \textbf{DocVQA~\cite{mathew2021docvqa}.}
	      A document-based visual question–answering dataset designed for understanding scanned documents, forms, and invoices.
	      We randomly split the validation and test sets in a 2:8 ratio.

	\item \textbf{COCO~\cite{lin2014microsoft}.}
	      A large-scale dataset for image captioning and object detection, containing diverse natural images with dense object annotations and descriptive captions.
	      We use the captioning task with the train/val/test split protocol proposed by~\citet{Karpathy_2015_CVPR}.

	\item \textbf{Flickr30k~\cite{flickr30k}.}
	      A benchmark for image–text retrieval and captioning, consisting of 31K real-world images paired with multiple human-written captions.
	      We use the captioning task with the train/val/test split protocol proposed by~\citet{Karpathy_2015_CVPR}.
\end{itemize}

\subsection{Metrics}
\label{append:metrics}
We evaluate each task using metrics that are commonly adopted for that task in the literature. For semantic segmentation, saliency estimation, and human-part segmentation, we use mean Intersection-over-Union (mIoU). Surface normal prediction is evaluated by the mean angular error between predicted and ground-truth normals. Depth estimation is measured with root mean squared error (RMSE). Edge detection is assessed using the optimal dataset scale F-measure (ods-F).
For the Taskonomy, principal curvature uses RMSE, and the remaining tasks use L1 distance, following \cite{chen2023mod}.
For NLI and VLM benchmarks, we report classification accuracy for sequence-classification and visual question answering tasks~\cite{liu2024improved,zhang2024lmmsevalrealitycheckevaluation}.
For image captioning, we report CIDEr.

\subsection{Implementation Details}
\label{append:imple}
For the heterogeneous vision setting, we optimize our learnable merging variants using the Adam~\cite{kingma2014adam} with a learning rate of 1e-4 for 100 iterations.
For conventional learning-free merging methods~\cite{ilharco2022editing,yadav2023ties,yu2024language,stoica2025knots,huang2024emr,zheng2025free,zeng2025parameter}, we choose the global scaling coefficient of the merged parameters by grid search on the validation loss. We first sweep the scale with a step size of 0.1, then run a finer sweep with a step size of 0.01 around the best scale found in the coarse search. Several baselines require additional hyperparameters beyond the global scale. For TIES~\cite{yadav2023ties} and KnOTS-TIES~\cite{stoica2025knots}, we keep the top 20\% of parameters by magnitude. DARE-TIES~\cite{yu2024language} and KnOTS-DARE-TIES~\cite{stoica2025knots} use a drop rate of 0.9 when discarding parameters. FR-Merging~\cite{zheng2025free} uses a low-frequency ratio of 0.10 and a high-frequency ratio of 0.70 for frequency-based filtering. For RobustMerge~\cite{zeng2025parameter}, we follow the hyperparameter configuration reported in the original work.
For merging both LLMs and VLMs, we optimized the learnable merging coefficients using AdamW~\cite{loshchilov2018decoupled} with a learning rate of $3\times10^{-4}$ for 500 iterations. For LLM experiments, we used a per-task batch size of 2, whereas for VLMs the batch size was set to~1.

We merge the multimodal projector into a unified module. While learning-free baselines follow their respective merging protocols, gradient-based methods utilize learnable scalar coefficients for each projector. In NSC, these coefficients are optimized via the objective, allowing gradients calculated in the LLM to flow back and update them.

\subsection{Fine-tuning Details}
\label{append:finetune}

For the heterogeneous vision setting, we start from a ViT-B model pretrained on ImageNet-21k~\cite{deng2009imagenet} and fine-tune it separately on each task. We train with a batch size of 32 for 40,000 iterations using a learning rate of 2e-5, weight decay of 1e-6, and a polynomial learning-rate schedule. LoRA~\cite{hu2022lora} adapters with rank 16 are applied to the Q, K, V, and O projection modules of each attention layer.

For both the natural language inference and vision-language task settings, LLaMA~\cite{dubey2024llama} and LLaVA~\cite{liu2024improved,liu2023visual} were fine-tuned using LoRA with rank~16.
LoRA adapters were applied to the query and value projection matrices of the self-attention modules.
We used the AdamW optimizer~\cite{loshchilov2018decoupled} together with a cosine learning rate scheduler~\cite{loshchilov2017sgdr}, employing a warmup phase corresponding to 6\% of the total training steps.
The learning rate was set to 3e-5.

For LLaVA, the vision encoder was kept frozen during fine-tuning, while the multimodal projectors were fully fine-tuned. For visual question answering tasks, we fine-tuned the model for 5 epochs. For image captioning, we fine-tuned for one epoch and treated each caption associated with the same image as an independent training sample.

\section{Additional Related Work}
\label{append:more_related_work}

More recent works on model merging directly target VLMs and LLMs and propose strategies tailored to these architectures~\cite{zhu2025remedy,zeng2025parameter,chen2025bring,du2025adamms,lu2024twin,wei2025unifying,zeng2025parameter,zhou2024metagpt}. In particular, RobustMerge~\cite{zeng2025parameter} introduces a merging method for MLLMs that exploits directional robustness in a low rank space. \citet{zhang2025beyond} leverage the rotation symmetry of self attention layers, which substantially enlarges the equivalence set of transformer models compared to permutation based symmetries. Another line of work explicitly connects model merging with conventional multi task learning~\cite{yang2024surgeryv2,yang2024representation,shen2024efficient,wei2025modeling}, viewing merging as a mechanism for parameter sharing and representation consolidation across tasks. A complementary set of studies aims to localize task specific information in the parameters or to quantify interference within linear layers~\cite{davari2024model,tam2023merging,wang2024localizing,cheng2025whoever}, while \citet{marczak2025no} further decompose the parameter space into shared and task specific subspaces. MuDSC~\cite{xu2024training} also explores heterogeneous settings with diverse dense prediction tasks for evaluating merging performance. However, it is not directly comparable to our setting because it allows branch like architectures such as ZipIt~\cite{stoica2023zipit}, whereas we focus on a strictly shared backbone.

Several methods exploit low rank structures for merging~\cite{prabhakar2024lora,gargiulo2025task,panariello2025accurate}. In particular, \citet{panariello2025accurate} propose a core space that can be efficiently combined with existing merging baselines. Model merging has also been studied in continual learning scenarios~\cite{Dziadzio_2025_CVPR,tang2025merging,yang2025continual,wang2024lines,wang2025memoir,qiu2025null,lee2025interaction}, where merging is used to mitigate catastrophic forgetting and to accumulate knowledge across tasks over time. Another line of work leverages intermediate activations during inference or training and uses feature responses in specific layers to guide merging process~\cite{nobari2025activation,liu2025sens,yao2025activation,wu2025unlocking}. Different from other approach, which modifies the merging procedure itself, pre merging methods~\cite{zhang2025lori,tang2023parameter,zhang2025unraveling} focus on constructing checkpoints that are more amenable to merging, shifting the emphasis from the merging algorithm to the pre training and finetuning pipeline.

In the supplementary material, we further compare against more recent merging baselines~\cite{huang2024emr,zheng2025free,zeng2025parameter} to demonstrate the robustness of our algorithms.

\section{More Analysis on Null-Space Compression}

\subsection{Null-Space Ratio Computation}
\label{append:gram_inverse_null_space_ratio}

\paragraph{Null-Space Ratio with Gram-Inverse.}
We briefly derive Eq.~(\textcolor{cvprblue}{6}).
Let $\bm{A} \in \mathbb{R}^{r \times d}$ denote the LoRA down-projection matrix.
The row space of $\bm{A}$ is
\[
	\mathcal{R}(\bm{A}^\top)
	= \{ \bm{A}^\top \bm{u} : \bm{u} \in \mathbb{R}^r \}
	\subset \mathbb{R}^d,
\]
and its orthogonal complement is the null space
\[
	\mathcal{N}(\bm{A}) = \{ \bm{z} \in \mathbb{R}^d : \bm{A}\bm{z} = \bm{0} \}.
\]
The projection matrix onto $\mathcal{R}(\bm{A}^\top)$ is
\[
	\bm{P}
	= \bm{A}^\top (\bm{A}\bm{A}^\top)^{-1} \bm{A}
\]
and, the projection matrix onto the null space is
\[
	\bm{P}_{\mathrm{null}}
	= \bm{I} - \bm{P}
	= \bm{I} - \bm{A}^\top (\bm{A}\bm{A}^\top)^{-1} \bm{A}.
\]

For a feature vector $\bm{z} \in \mathbb{R}^d$, the null-space ratio is defined as
\[
	\omega(\bm{z})
	= \frac{\| \bm{P}_{\mathrm{null}} \bm{z} \|_2}{\|\bm{z}\|_2}.
\]
We compute the squared ratio:
\begin{align*}
	\omega(\bm{z})^2
	 & = \frac{\| (\bm{I} - \bm{P}) \bm{z} \|_2^2}{\|\bm{z}\|_2^2}          \\
	 & = \frac{\bm{z}^\top (\bm{I} - \bm{P})^\top (\bm{I} - \bm{P}) \bm{z}}
	{\|\bm{z}\|_2^2}.
\end{align*}
Because $\bm{P}$ is symmetric and idempotent,
\(
(\bm{I} - \bm{P})^\top(\bm{I} - \bm{P}) = \bm{I} - \bm{P}.
\)
Thus,
\begin{align*}
	\omega(\bm{z})^2
	 & = \frac{\bm{z}^\top (\bm{I} - \bm{P}) \bm{z}}{\|\bm{z}\|_2^2} \\
	 & = 1 - \frac{\bm{z}^\top \bm{P} \bm{z}}{\|\bm{z}\|_2^2}        \\
	 & = 1 - \frac{\bm{z}^\top
		\bm{A}^\top (\bm{A}\bm{A}^\top)^{-1} \bm{A} \bm{z}}
	{\|\bm{z}\|_2^2}.
\end{align*}
Taking the square root yields
\[
	\omega(\bm{z})
	= \sqrt{
		1 -
		\frac{
			\bm{z}^\top
			\bm{A}^\top (\bm{A}\bm{A}^\top)^{-1} \bm{A} \bm{z}
		}
		{\|\bm{z}\|_2^2}
	}.
\]
Replacing $\bm{A}$ with $\bm{A}_k$ recovers Eq.~(\textcolor{cvprblue}{6}).

In the perspective of a LoRA-equipped neural network, the term $\bm{A}\bm{z}$ is already computed during inference.
Therefore, rather than directly applying the full projection matrix, which would require constructing and multiplying by a dense $d \times d$ operator and is prohibitively expensive for modern LLMs and VLMs where $d$ easily reaches several thousands, we instead rely on a more efficient formulation based on the Gram-inverse.
Using the Gram-inverse $(\bm{A}_k \bm{A}_k^{\top})^{-1}$ yields a substantially more efficient formulation. All computations remain confined to the $r$-dimensional LoRA subspace, avoiding the need to store large projection matrices on GPU and reducing the complexity from $\mathcal{O}(d^2)$ to $\mathcal{O}(r^2)$.
Since the LoRA rank $r$ is typically tiny relative to $d$ (e.g., $r = 16$ while $d$ is in the thousands), this approach significantly reduces both memory usage and FLOPs. Consequently, the Gram-inverse formulation enables efficient evaluation of the null-space ratio during LoRA merging and analysis, making it suitable for large-scale models, as demonstrated in Sec.~\textcolor{cvprblue}{4.3}.

From a numerical standpoint, the Gram-inverse is also stable to compute. Because $\bm{A}_k \bm{A}_k^{\top}$ is an $r \times r$ symmetric positive (semi-)definite matrix, its inverse can be obtained efficiently using Cholesky factorization, which offers strong numerical stability and minimal memory overhead.

\begin{figure*}[t]
	\centering

	\begin{subfigure}{\linewidth}
		\centering
		\includegraphics[width=\linewidth]{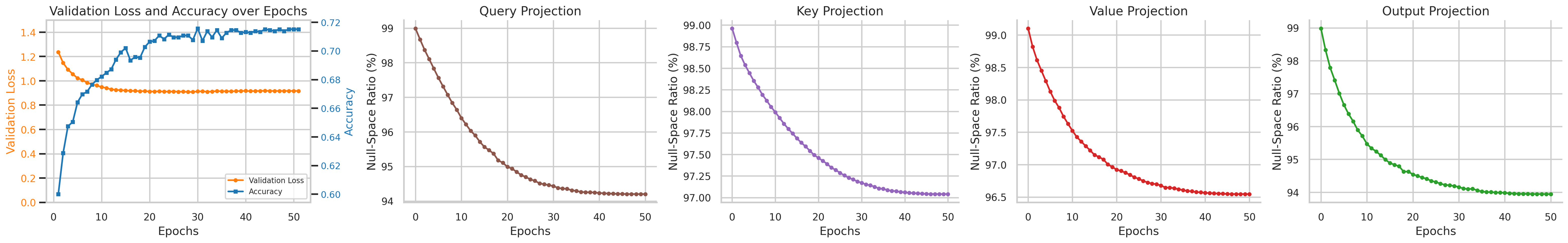}%
		\caption{Image Classification (Classification Task)}
		\label{subfig:cls_nsc_supp}
	\end{subfigure}

	\vspace{0.6em}

	\begin{subfigure}{\linewidth}
		\centering
		\includegraphics[width=\linewidth]{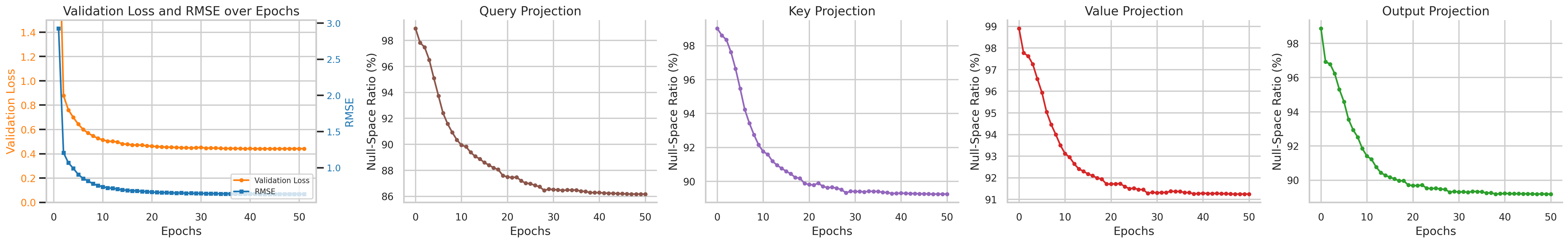}%
		\caption{Depth Estimation (Regression Task)}
		\label{subfig:depth_nsc_supp}
	\end{subfigure}

	\caption{
		Extended visualization of validation loss, task performance, and the null-space ratio during LoRA fine-tuning on
		(a) image classification and (b) depth estimation. We additionally show
		the null-space ratio trajectories of LoRA at query, key, value projection of self-attention module within a
		single transformer block, further illustrating the null-space compression phenomenon.
	}
	\label{fig:nsc_finetuning_supp}
	\vspace{-5pt}
\end{figure*}
\begin{figure}[t]
	\centering

	\begin{subfigure}{\linewidth}
		\centering
		\includegraphics[width=\linewidth]{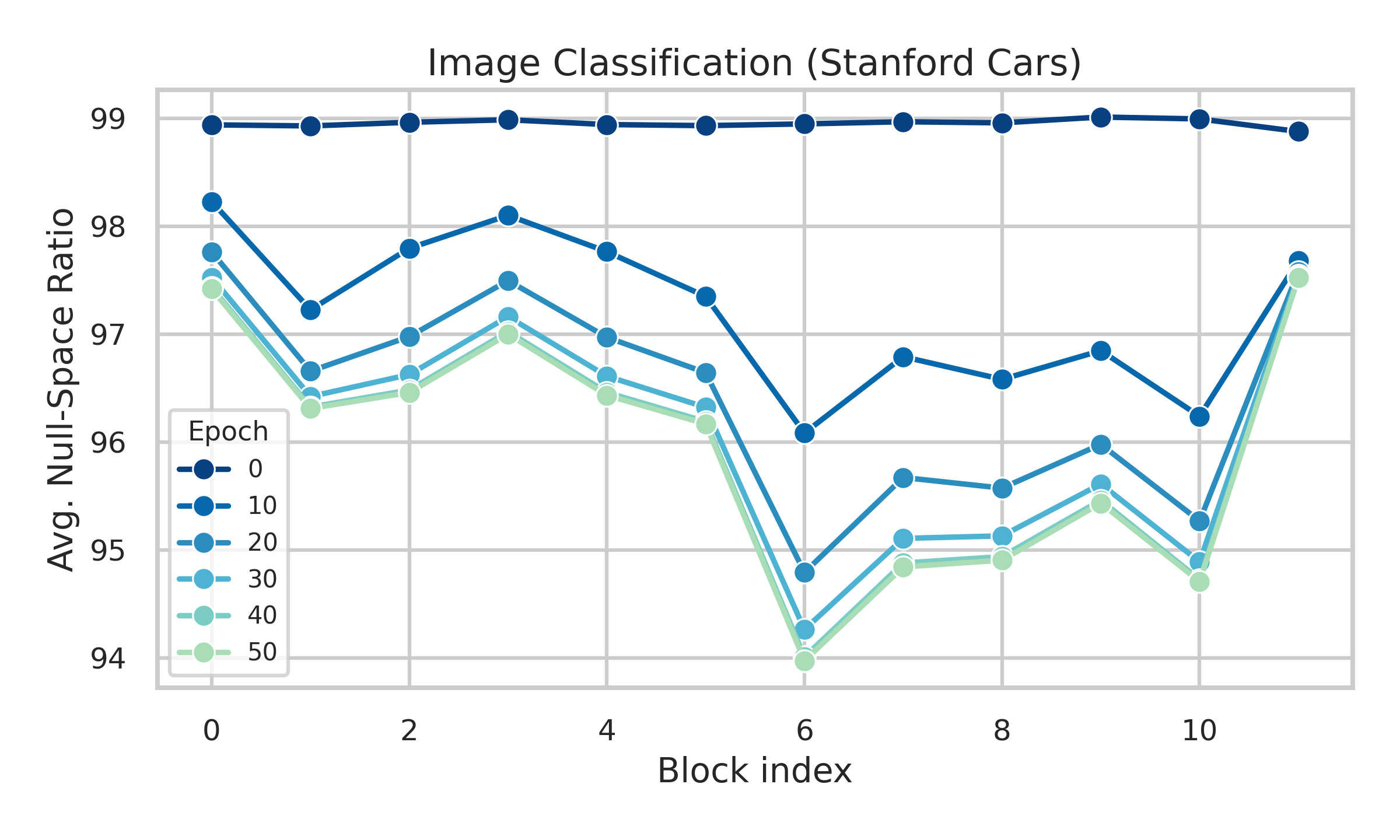}
		\caption{Image Classification (Classification Task)}
		\label{subfig:block_cls_nsc_supp}
	\end{subfigure}

	\begin{subfigure}{\linewidth}
		\centering
		\includegraphics[width=\linewidth]{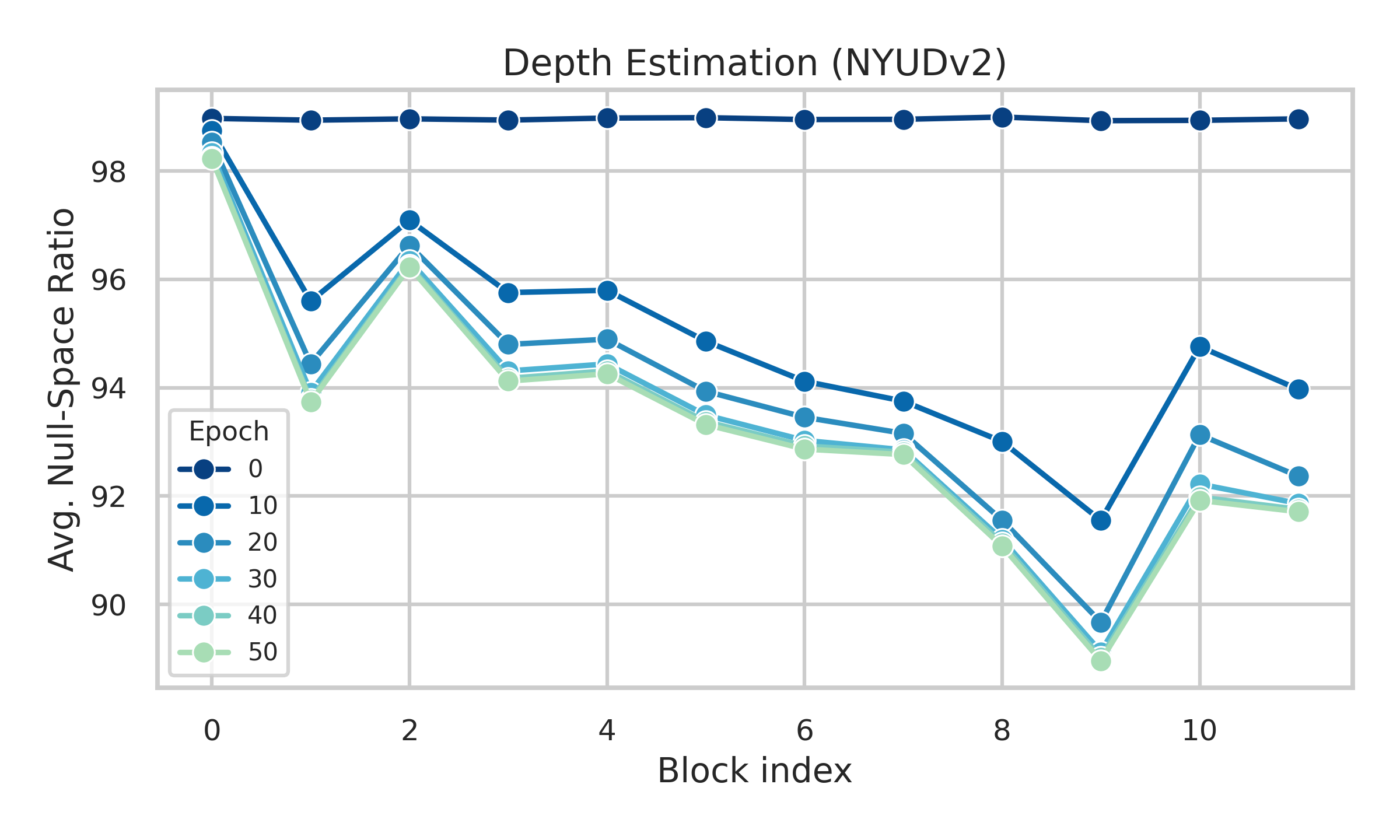}
		\caption{Depth Estimation (Regression Task)}
		\label{subfig:block_depth_nsc_supp}
	\end{subfigure}

	\caption{
		Null-space ratio of each transformer block during LoRA fine-tuning for (a) image classification and (b) depth estimation.
		Values are averaged across transformer blocks, showing that null-space compression consistently occurs throughout the model.
	}
	\label{fig:block_nsc_finetuning_supp}
	\vspace{-5pt}
\end{figure}

\paragraph{Impact on Adapter Strength.}

We analyze how the null-space ratio controls the effective magnitude of the LoRA update.
For simplicity, we omit the layer index.
Let $\Delta \bm W = \bm B_k \bm A_k$ be a LoRA update for task $k \in \{1, \dots, K\}$.

\begin{proposition}[Adapter effect lower bound]
	\label{prop:adapter_strength_lower_bound}
	For any LoRA update $\Delta \bm W = \bm B_k \bm A_k$, and $\bm{z} \in \mathbb{R}^d$, the following inequality holds:
	\begin{equation*}
		\|\bm{B}_k\bm{A}_k\bm{z}\|_2
		\;\ge\;
		C_k \sqrt{1 - \omega_k^2(\bm{z})} \;\|\bm{z}\|_2,
	\end{equation*}
	where $C_k = \sigma_{\min}(\bm{B}_k)\,\sigma_{\min}(\bm{A}_k)$.
\end{proposition}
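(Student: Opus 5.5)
The plan is to factor the bound through the two matrices separately. Then identify the quantity $\sqrt{1-\omega_k^2(\bm z)}\,\|\bm z\|_2$ with the norm of the row-space component of $\bm z$.

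Following the supplementary derivation of \cref{eq:null_ratio_simple}, we use the convention $\bm A_k\in\mathbb{R}^{r\times d}$ with full row rank $r$. This is needed for $(\bm A_k\bm A_k^\top)^{-1}$ to exist. Write $\bm P=\bm A_k^\top(\bm A_k\bm A_k^\top)^{-1}\bm A_k$ and decompose $\bm z=\bm P\bm z+(\bm I-\bm P)\bm z$. The second piece lies in $\mathcal N(\bm A_k)$, so $\bm A_k\bm z=\bm A_k\bm P\bm z$. By the Pythagorean identity and the definition in \cref{eq:null_ratio},
\[
\|\bm P\bm z\|_2^2=\|\bm z\|_2^2-\|(\bm I-\bm P)\bm z\|_2^2=(1-\omega_k^2(\bm z))\|\bm z\|_2^2 .
\]
This is exactly the computation already carried out in the derivation of \cref{eq:null_ratio_simple}.

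The first factor handles $\bm A_k$ on its row space. The vector $\bm P\bm z$ lies in $\mathcal R(\bm A_k^\top)$, the span of the right singular vectors of $\bm A_k$ associated with its $r$ nonzero singular values. Expanding in the SVD $\bm A_k=\bm U\bm\Sigma\bm V^\top$ gives $\|\bm A_k\bm P\bm z\|_2\ge\sigma_{\min}(\bm A_k)\|\bm P\bm z\|_2$, where $\sigma_{\min}(\bm A_k)$ is the smallest of these $r$ singular values, i.e. the $r$-th one.

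The second factor handles $\bm B_k$. Since $\bm B_k\in\mathbb{R}^{d_{\text{out}}\times r}$ is tall, $\|\bm B_k\bm u\|_2\ge\sigma_{\min}(\bm B_k)\|\bm u\|_2$ for every $\bm u\in\mathbb{R}^r$, because $\bm u^\top\bm B_k^\top\bm B_k\bm u\ge\lambda_{\min}(\bm B_k^\top\bm B_k)\|\bm u\|^2$. Apply this with $\bm u=\bm A_k\bm z$ and chain the three displays:
\[
\|\bm B_k\bm A_k\bm z\|_2\ge\sigma_{\min}(\bm B_k)\,\sigma_{\min}(\bm A_k)\,\|\bm P\bm z\|_2=C_k\sqrt{1-\omega_k^2(\bm z)}\,\|\bm z\|_2 .
\]

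The only delicate point is the meaning of $\sigma_{\min}$. For the wide matrix $\bm A_k$ it must denote the smallest of its $r$ singular values, not zero. For $\bm B_k$ it must be the $r$-th singular value, which may vanish if $\bm B_k$ is rank-deficient; the bound then holds trivially. The restriction to the row space is what makes the $\bm A_k$ step valid, since $\bm A_k$ is not injective on all of $\mathbb{R}^d$.
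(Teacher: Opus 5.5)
Your proposal is correct and follows essentially the same route as the paper: reduce to the row-space component via $\bm A_k\bm z=\bm A_k\bm P\bm z$, apply the singular-value lower bounds for $\bm A_k$ on its row space and for $\bm B_k$, chain them, and identify $\|\bm P\bm z\|_2=\sqrt{1-\omega_k^2(\bm z)}\,\|\bm z\|_2$. You also state explicitly two assumptions the paper leaves implicit: $\bm A_k$ has full row rank, and $\sigma_{\min}$ means the $r$-th singular value.
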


\begin{proof}
	We begin with the standard singular value lower bound
	\[
		\|\bm{A}_k\bm{z}\|_2
		\;=\;
		\|\bm{A}_k\bm{P}\bm{z}\|_2
		\;\ge\;
		\sigma_{\min}(\bm{A}_k) \,
		\|\bm P \bm{z}\|_2,
	\]
	where $\bm P$ is the projection matrix onto a row space of $\bm A_k$.
	Applying the same inequality to $\bm{B}_k$ yields
	\[
		\|\bm{B}_k\bm{A}_k\bm{z}\|_2
		\;\ge\;
		\sigma_{\min}(\bm{B}_k)\,\|\bm{A}_k\bm{z}\|_2.
	\]
	Combining the inequalities gives
	\[
		\|\bm{B}_k\bm{A}_k\bm{z}\|_2
		\;\ge\;
		\sigma_{\min}(\bm{B}_k)\,\sigma_{\min}(\bm{A}_k)
		\;\|\bm P\bm{z}\|_2.
	\]
	Finally, using the definition of the null-space ratio,
	\[
		1 - \omega_k^2(\bm{z})
		=
		\frac{\|\bm P\bm{z}\|_2^2}{\|\bm{z}\|_2^2},
	\]
	we obtain the desired result.
\end{proof}

This bound shows that the strength of the LoRA update is fundamentally governed by how much of the input feature $\bm{z}$ lies inside the row space of the down-projection matrix $\bm{A}$.
The term $(1 - \omega_k^2(\bm{z}))$ quantifies this alignment: it is large when $\bm{z}$ contains a substantial row-space component and small when $\bm{z}$ is mostly aligned with the null space.
Because the lower bound grows proportionally to $\sqrt{1 - \omega_k^2(\bm{z})}$, any decrease in the null-space ratio necessarily increases the minimum possible magnitude of the LoRA update.
On multi-task settings, the lower null-space ratio guarantees higher lower-bound for the task-specific activation of the layer.
Consequently, the NSC objective implicitly matches task-specific activations without requiring explicit access to them and yielding stronger task-conditioned updates.

\subsection{Null-Space Compression during the Fine-tuning of LoRA Adapters}
\label{append:more_analysis_nsc_finetune}
We further analyze null-space compression during fine-tuning of LoRA-equipped models. Specifically, in \cref{fig:nsc_finetuning_supp} we report the null-space ratio for different parts of the self-attention module that were analyzed in the main paper. We track the null-space ratio over 50 epochs of LoRA fine-tuning for all self-attention projections (query, key, value, and output) within a representative transformer block, and we use the 9th block for this visualization. For both image classification and depth estimation, all projections exhibit a consistent decrease in null-space ratio over training. This trend shows that null-space compression is not confined to a particular projection but emerges across all components of the attention mechanism. Despite the small dimensionality of the LoRA subspace, we still observe a substantial reduction in null-space ratio across all LoRA-equipped modules.

To understand how null-space compression occurs across depth, we measure the null-space ratio for each transformer block during LoRA fine-tuning. \Cref{fig:block_nsc_finetuning_supp} reports the evolution of the null-space ratio over 50 epochs for all transformer blocks on both image classification and depth estimation. Across both tasks, all blocks start with relatively high null-space ratios and exhibit a gradual and consistent decrease as training progresses, indicating that null-space compression is not restricted to a few layers but occurs throughout the network. Together with the analysis in \cref{fig:nsc_finetuning_supp}, these results show that our objective can be used effectively across both the components and the depth of the attention module.

\section{More Experiments}
\label{append:more_exp}
\subsection{Extended Analysis of Main Results}

\noindent\textbf{Analysis on Heterogeneous Vision Tasks.}
As presented in \cref{tab:nyud_pascal_taskonomy} of the main paper, our comparison includes recent merging methods such as EMR-Merging~\cite{huang2024emr}, FR-Merging~\cite{zheng2025free}, and RobustMerge~\cite{zeng2025parameter}. Here, we provide a more detailed analysis of their performance across the twenty heterogeneous tasks. EMR-Merging degrades on dense prediction, yielding lower averages overall (77.0\% and 79.5\%). FR-Merging applies a Fourier-domain filter that emphasizes mid- to high-frequency weight components to reduce interference, which improves over vanilla baselines yet still trails the top methods (83.4\%). RobustMerge is the strongest among conventional baselines, showing balanced performance across tasks with notably better preservation of semantic segmentation and depth. NSC achieves the highest overall average (92.0\%) and leads on the hardest tasks (for example, 85.1\% on NYUD-v2 Semseg and 76.7\% on PASCAL-Context Parts), indicating more stable performance when objectives differ across classification and regression.

We observe that collapsed tasks in conventional baselines are predominantly high-level vision tasks (e.g., semantic segmentation), while low-level tasks (e.g., edge detection) tend to dominate the merged model.
We posit that \textit{inter-task affinity} serves as an implicit metric for this misalignment.
Following~\cite{fifty2021efficiently}, we define affinity as the relative reduction in the loss of task $j$ after a gradient update on task $i$: $\text{Affinity}_{i \rightarrow j} = 1 - \frac{\mathcal{L}_j(\theta - \eta \nabla \mathcal{L}_i)}{\mathcal{L}_j(\theta)}$.
Empirically, collapsed tasks show notably low affinity with the dominating tasks.
For instance, in PASCAL-Context, the affinity of Edge Detection towards Semantic Segmentation is only 0.34, whereas the affinity towards Surface Normals is much higher at 0.79.
This suggests that simple parameter averaging (TA, TIES) inherently fails when the dominant task introduces parameter shifts that suppress the activations required by other tasks.

While NSC does not eliminate the inherent dissimilarity between tasks, it prevents the catastrophic collapse stemming from signal vanishing.
Theoretically, as proven in~\cref{prop:adapter_strength_lower_bound}, minimizing the null-space ratio provides a mathematical guarantee on the lower bound of the adapter's signal strength ($\|\bm B_k \bm A_k z\|$).
In baselines, low affinity causes task signals to cancel out and fall into the null space.
In contrast, NSC explicitly optimizes coefficients to preserve these principal subspaces, serving as a safeguard to prevent feature collapse.

\begin{table*}[t]
	\caption{
		Ablation on the token position used to compute the NSC objective.
		We compare First-Token, Last-Token, and Full-Sequence scoring on six VLM benchmarks.
		We report average normalized performance and optimization time.
		The chosen design is shaded.
	}
	\vspace{-5pt}
	\centering
	\scriptsize
	\setlength{\tabcolsep}{2pt}
	\begin{tabularx}{0.99\linewidth}{D D D|S S S S S S|S S}
		\toprule
		\multicolumn{3}{c|}{\textbf{Token Position Strategy}} &
		\multicolumn{6}{c|}{\textbf{Dataset}}                 &
		\multicolumn{2}{c}{\textbf{Performance}}                                                                                                         \\
		\cmidrule(lr){1-3} \cmidrule(lr){4-11}
		\textbf{First-Token}                                  & \textbf{Last-Token} & \textbf{Full-Sequence} &
		\textbf{IconQA}                                       & \textbf{VizWiz}     & \textbf{ChartQA}       &
		\textbf{DocVQA}                                       & \textbf{COCO}       & \textbf{Flickr}        &
		\textbf{Avg}                                          & \textbf{Time (min)}                                                                      \\
		\midrule
		                                                      &                     & \checkmark             &
		59.5                                                  & 82.8                & 78.0                   & 87.0 & 91.5 & 97.3 & 82.7          & 83.7 \\
		                                                      & \checkmark          &                        &
		60.0                                                  & 82.9                & 77.7                   & 87.1 & 91.5 & 97.2 & \textbf{82.8} & 65.1 \\
		\rowcolor[gray]{0.9}
		\checkmark                                            &                     &                        &
		59.7                                                  & 82.9                & 78.1                   & 87.1 & 91.7 & 96.8 & 82.7          & 13.3 \\
		\bottomrule
	\end{tabularx}
	\label{tab:vlm_pos_abl}
\end{table*}
\begin{table*}[t]
	\centering
	\caption{
		Ablation on the input modality used to compute the NSC objective.
		We compare text-only, vision-only, and joint vision–language scoring on six VLM benchmarks.
		We report average normalized performance and optimization time.
		The chosen design is shaded.
	}
	\vspace{-5pt}
	\scriptsize
	\setlength{\tabcolsep}{2pt}
	\begin{tabularx}{0.99\linewidth}{D D|S S S S S S|S S}
		\toprule
		\multicolumn{2}{c|}{\textbf{Input Modality}} &
		\multicolumn{6}{c|}{\textbf{Dataset}}        &
		\multicolumn{2}{c}{\textbf{Performance}}                                                                                          \\
		\cmidrule(lr){1-2} \cmidrule(lr){3-10}
		\textbf{Text}                                & \textbf{Image}      &
		\textbf{IconQA}                              & \textbf{VizWiz}     & \textbf{ChartQA} &
		\textbf{DocVQA}                              & \textbf{COCO}       & \textbf{Flickr}  &
		\textbf{Avg}                                 & \textbf{Time (min)}                                                                \\
		\midrule
		\checkmark                                   &                     &
		56.6                                         & 83.8                & 74.3             & 83.9 & 93.3 & 97.7 & 81.6          & 6.0  \\
		                                             & \checkmark          &
		60.2                                         & 82.4                & 78.4             & 87.0 & 91.4 & 97.0 & \textbf{82.7} & 13.1 \\
		\rowcolor[gray]{0.9}
		\checkmark                                   & \checkmark          &
		59.7                                         & 82.9                & 78.1             & 87.1 & 91.7 & 96.8 & \textbf{82.7} & 13.3 \\
		\bottomrule
	\end{tabularx}
	\label{tab:vlm_mod_abl}
\end{table*}

\vspace{2pt}
\noindent\textbf{Analysis on VLM Benchmarks.}
Building on the results in \cref{tab:six_llava_long2_benchmarks_rank16}, we further analyze per-task performance on the six multi-modal benchmarks. Among the newly added baselines, EMR-Merging~\cite{huang2024emr} and FR-Merging~\cite{zheng2025free} both achieve strong performance on benchmarks such as MNLI and SNLI (e.g., 96.2 and 94.2 for EMR-Merging on MNLI and SNLI, respectively), but exhibit noticeable degradation on SICK, dropping to 76.6 and 80.3. RobustMerge~\cite{zeng2025parameter} yields the best average among all prior baselines, with consistently high scores across MNLI, SNLI, RTE, and SciTail, yet still underperforms our approach. NSC attains the highest overall average of 92.3, demonstrating that our method works well on NLI datasets compared to recent merging baselines.

\vspace{2pt}
\noindent\textbf{Detailed Analysis on VLM Benchmarks.}
Building on the results in \cref{tab:six_llava_long2_benchmarks_rank16}, we further analyze per-task performance on the six multi-modal benchmarks, where all merged results are normalized to their corresponding fine-tuned baselines. Among the recent methods, EMR-Merging~\cite{huang2024emr} and FR-Merging~\cite{zheng2025free} show large gains on IconQA (84.0 and 87.2, respectively), but suffer severe degradation on VizWiz (49.2 and 46.3), leading to lower overall averages of 78.7 and 76.0. As a result, both methods underperform traditional parameter-space baselines such as TA and TIES, which reach 81.4 and 81.8 on average, and also fall behind adaptive approaches like AdaMerging~\cite{yang2023adamerging}, which achieves 80.9 with single-token adaptation and 82.4 with full-token adaptation. NSC attains the highest overall average of 82.7, while maintaining strong performance, indicating that NSC extends well to multi-modal VLM settings and improves over recent baselines.

\vspace{2pt}
\noindent\textbf{Analysis on Generalization to Unseen Tasks.}
\Cref{tab:nyud_pascal_taskonomy_seen_unseen_grouped} evaluates generalization on ten seen and ten unseen tasks. Expanding on the main text, we observe that the overall trends mirror those in \cref{tab:nyud_pascal_taskonomy}. NSC attains the best averages on both splits, with 95.1\% on seen, 87.1\% on unseen, and 91.1\% overall, indicating strong retention on seen tasks and robust transfer to tasks without checkpoints. EMR-Merging similarly degrade on heterogeneous dense tasks. FR-Merging improves some Taskonomy metrics but remains unstable across datasets. RobustMerge is the most balanced among the recent baselines. Overall, NSC provides the most consistent gains across datasets and task types.

\vspace{2pt}
\noindent\textbf{Impact of token position and sequence length on NSC.}
\Cref{tab:vlm_pos_abl} ablates where in the generated sequence we compute the NSC objective, comparing First-Token, Last-Token, and Full-Sequence scoring on six VLM benchmarks. All three variants achieve very similar performance, with averages, indicating that NSC is largely insensitive to the exact token position used. We attribute this robustness to the fact that NSC operates on the geometry of the LoRA parameters (through the null-space ratio) rather than relying on a particular token’s semantics. Moreover, using the full sequence does not yield noticeable gains over single-token variants. In contrast, First-Token scoring avoids waiting for the full response to be generated before backpropagation, making NSC substantially more efficient in practice. We therefore adopt First-Token as our default design.

\begin{table}[t]
	\centering
	\caption{Detailed compute cost breakdown of AdaMerging and NSC on VLM tasks. All times are in seconds and measured on a single NVIDIA A6000 GPU.}
	\label{tab:compute_cost}
	\scriptsize
	\setlength{\tabcolsep}{3pt}
	\renewcommand{\arraystretch}{1.0}
	\resizebox{0.90\linewidth}{!}{
		\begin{tabular}{l|cc}
			\toprule
			\textbf{Process Step}            & \textbf{AdaMerging} & \textbf{NSC (Ours)} \\
			\midrule
			Pre-computation (Gram-Inverse)   & \cvprxmark          & 0.7                 \\
			Forward Pass                     & 396.7               & 407.2               \\
			Entropy/NSC Calculation          & 1.1                 & 4.1                 \\
			Backpropagation + Optimizer Step & 393.4               & 386.7               \\
			\bottomrule
		\end{tabular}}
	\vspace{-5pt}
\end{table}

\vspace{2pt}
\noindent\textbf{Effect of input modality in NSC.}
\Cref{tab:vlm_mod_abl} studies which input modality is used to compute the NSC objective on six VLM benchmarks, comparing text-only, image-only, and joint vision–language inputs. Text-only scoring performs reasonably well but lags behind the configurations that use image features. Image-only and joint scoring both reach an average of 82.7, and consistently outperform text-only, showing that visual information is more critical for guiding NSC in multi-modal settings. Based on this observation, we adopt the joint vision–language configuration as our design.

\vspace{2pt}
\noindent\textbf{Compute Accounting.}
In addition to \cref{tab:cvpr_time_efficiency}, we provide a more detailed breakdown of the optimization costs of AdaMerging and NSC in \cref{tab:compute_cost}.
The Gram-Inverse pre-computation step for NSC takes approximately 0.7 seconds, which is a one-time cost at the beginning of optimization and does not affect the per-iteration cost during training.
The forward pass and backpropagation steps are comparable between the two methods, with NSC being slightly faster in backpropagation due to more efficient gradient updates.
The NSC calculation itself takes about 4.1 seconds per iteration, which is more expensive than the entropy calculation in AdaMerging (1.1 seconds), but this additional cost is relatively small compared to the overall iteration time.
Therefore, while NSC introduces some overhead due to its more complex objective, it remains computationally feasible and efficient for practical use in large-scale models.
All results are based on VLM tasks, with additional cost analysis reported in \cref{tab:cvpr_time_efficiency}, where our method demonstrates faster preparation compared to SVD-based baselines requiring pre-computation.

\begin{table}[t]
	\centering
	\caption{Ablation on where the NSC objective is applied: target module and number of blocks evaluated on 20 heterogeneous vision tasks.}  \vspace{-5pt}
	\label{tab:projection-blocks_heterogeneous}
	\setlength{\tabcolsep}{2.8pt}
	\renewcommand{\arraystretch}{0.9}
	\resizebox{0.99\linewidth}{!}{%
		\begin{tabular}{lCCCC}
			\toprule
			\multirow{2}{*}{\textbf{Targeted Projection Matrix}} &
			\multicolumn{4}{c}{\textbf{Number of Activated Transformer Blocks}}                                       \\
			\cmidrule(lr){2-5}
			                                                     & \textbf{12} & \textbf{6} & \textbf{3} & \textbf{1} \\
			\midrule
			QKVO                                                 & 92.0        & 92.0       & 92.0       & 91.9       \\
			KVO                                                  & 91.6        & 91.6       & 91.6       & 91.4       \\
			VO                                                   & 92.0        & 92.0       & 92.0       & 91.6       \\
			O                                                    & 91.7        & 91.7       & 91.8       & 91.7       \\
			\bottomrule
		\end{tabular}%
	}
	\vspace{-5pt}
\end{table}

\vspace{2pt}
\noindent\textbf{Robustness to target module and block count.}
\Cref{tab:projection-blocks_heterogeneous} ablates where the NSC objective is applied in the attention stack and how many transformer blocks are activated on the 20 heterogeneous vision tasks. We vary the targeted projection matrix across QKVO, KVO, VO, and O, and sweep the number of activated blocks from all 12 layers down to only the last block. Across all configurations, the average performance remains clustered around 92.0, showing that NSC is highly robust to the choice of projection matrix. Targeting VO or QKVO slightly better performance, but the gap relative to using only O is within 0.3, which is negligible.

\begin{table}[t]
	\centering
	\scriptsize
	\caption{Optimization stability of gradient-based merging methods across 5 random seeds.}
	\label{tab:variance}
	\renewcommand{\arraystretch}{1.0}
	\setlength{\tabcolsep}{3pt}
	\resizebox{0.85\linewidth}{!}{
		\begin{tabular}{llc}
			\toprule
			\textbf{Benchmark}         & \textbf{Method} & \textbf{Avg. Norm. Acc.} (mean $\pm$ std) \\
			\midrule
			\multirow{2}{*}{LLM Tasks} & AdaMerging      & 90.08 $\pm$ 0.44                          \\
			                           & NSC (Ours)      & 92.25 $\pm$ 0.08                          \\
			\bottomrule
		\end{tabular}}
\end{table}

\begin{table}[t]
	\centering
	\caption{Impact of unlabeled sample size on performance of NSC on LLM tasks.}
	\renewcommand{\arraystretch}{0.99}
	\setlength{\tabcolsep}{3pt}
	\resizebox{0.99\linewidth}{!}{
		\label{tab:data_eff}
		\begin{tabular}{l|cccc}
			\toprule
			\textbf{\# Samples per Task} & \textbf{1}       & \textbf{10}      & \textbf{100}     & \textbf{Full Data} \\
			\midrule
			Avg. Norm. Acc.              & 91.88 $\pm$ 0.35 & 92.19 $\pm$ 0.16 & 92.25 $\pm$ 0.12 & 92.25 $\pm$ 0.08   \\
			\bottomrule
		\end{tabular}}
\end{table}

\vspace{2pt}
\noindent\textbf{Variance Analysis \& Data Efficiency.}
Since gradient-free baselines (e.g., TA~\cite{ilharco2022editing}, TIES~\cite{yadav2023ties}) are deterministic, we evaluate optimization stability by comparing against AdaMerging~\cite{yang2023adamerging} on LLM tasks.
As shown in \cref{tab:variance}, NSC demonstrates substantially lower variance across random seeds, achieving a standard deviation of 0.08 compared to 0.44 for AdaMerging.
This indicates that NSC yields more consistent performance and is less sensitive to initialization and stochastic effects during optimization.

We further analyze data efficiency in \cref{tab:data_eff}.
Although performance slightly decreases with fewer unlabeled samples, NSC retains near-peak accuracy even under extremely limited data regimes, achieving competitive results with as few as 10 samples per task.

\vspace{2pt}
\noindent\textbf{Zero-shot Image Classification Benchmarks.}
Following task arithmetic (TA)~\cite{ilharco2022editing}, we also evaluate our method on merging eight image-classification models based on CLIP/ViT-B-32~\cite{radford2021learning}.
All models are fine-tuned using LoRA with rank 16. We report performance on eight widely used classification benchmarks, SUN397~\cite{xiao_sun_2010}, Cars~\cite{krause20133d}, RESISC45~\cite{cheng_remote_2017}, EuroSAT~\cite{helber_eurosat_2019}, SVHN~\cite{netzer_reading_nodate}, GTSRB~\cite{stallkamp_german_2011}, MNIST~\cite{lecun_mnist_2005}, and DTD~\cite{cimpoi_describing_2014}.
As shown in \Cref{tab:clip_short}, NSC does not always outperform AdaMerging~\cite{yang2023adamerging} in this CLIP-based classification setting. On these traditional, classification-centric benchmarks, NSC consistently improves over learning-free baselines such as TA and TIES~\cite{ilharco2022editing,yadav2023ties}, achieving higher averaged normalized accuracy across the eight datasets. However, NSC lags behind AdaMerging, which is specifically designed with entropy-based objectives. These results suggest that NSC plays a complementary role to conventional entropy-driven merging.
In particular, NSC is most beneficial in regimes where entropy cannot be computed, such as regression tasks, or where computing or optimizing entropy is not efficient or effective, such as large-scale LLM and VLM settings, while entropy-based methods remain preferable on standard classification benchmarks.

\begin{table}[t]
	\centering
	\setlength{\tabcolsep}{2pt}
	\caption{
		Performance comparison across visual classification tasks. Absolute accuracies (top), and
		normalized accuracies of merged models (bottom).
	}
	\resizebox{0.99\linewidth}{!}{
		\begin{tabular}{l|c c c c c c c c|c}
			\toprule
			\multirow{2}{*}{\textbf{Method}}     &
			\multicolumn{9}{c}{\textbf{Dataset}}                                                                                                                                                          \\
			\cmidrule(lr){2-10}
			                                     & \textbf{Cars} & \textbf{DTD} & \textbf{EuroSAT} & \textbf{GTSRB} & \textbf{MNIST} & \textbf{RESISC45} & \textbf{SUN397} & \textbf{SVHN} & \textbf{Avg} \\
			\midrule
			\midrule
			Finetuned                            & 70.7          & 67.6         & 98.5             & 97.6           & 99.4           & 91.6              & 72.0            & 95.7          & 86.6         \\
			\midrule
			\midrule
			TA~\cite{ilharco2022editing}         & 87.8          & 78.6         & 62.9             & 70.5           & 92.2           & 78.8              & 92.4            & 85.9          & 81.1         \\
			TIES~\cite{yadav2023ties}            & 90.6          & 80.4         & 78.8             & 66.1           & 91.6           & 81.2              & 92.5            & 82.5          & 82.9         \\
			\midrule
			AdaMerging~\cite{yang2023adamerging} & 90.1          & 76.8         & 87.8             & 86.0           & 94.7           & 85.6              & 90.8            & 82.8          & 86.8         \\
			NSC (Ours)                           & 89.4          & 78.7         & 80.3             & 74.4           & 93.4           & 82.6              & 92.1            & 86.0          & 84.6         \\
			\bottomrule
		\end{tabular}
	}
	\label{tab:clip_short}
\end{table}

\end{document}